\definecolor{Gray}{gray}{0.9}
\newcommand{\cmark}{\ding{51}}%
\newcommand{\xmark}{\ding{55}}%
\DeclareMathOperator*{\argmin}{argmin}
\DeclareMathOperator*{\argmax}{argmax}
\newtheorem*{theorem*}{Theorem}
\newtheorem*{proposition*}{Proposition}
\newtheorem*{assumption*}{Assumption}
\newenvironment{subtheorems}
 {\itemize[
   nosep,font=\bfseries,
   leftmargin=3em,itemindent=-1em,align=left]}
 {\enditemize}
\renewcommand{\thefootnote}{\arabic{footnote}}
\newcommand{\bmtheta}[0]{{\bm{\theta}}}
\newcommand{\bmeps}[0]{{\bm{\epsilon}}}
\newcommand{\bmeta}[0]{{\bm{\eta}}}
\newcommand{\bmf}[0]{{\bm{f}}}
\newcommand{\bmx}[0]{{\bm{x}}}
\newcommand{\bms}[0]{{\bm{s}}}
\newcommand{\bmz}[0]{{\bm{z}}}
\newcommand{\bmq}[0]{{\bm{q}}}
\newcommand{\Vy}[0]{{V_y}}
\newcommand{\eps}[0]{{\bar{\epsilon}}}
\newcommand{\ours}[0]{FI-ODE }
\newcommand{\BR}{\mathbb R}
\newcommand{\den}{{N}}
\newcommand{\state}{{\bm{\eta}}}
\newcommand{\params}{{\bm{\theta}}}
\newcommand{\sparams}{{\bm{\theta}^*}}
\newcommand{\dyn}{{\bm{f}}}
\newcommand{\ins}{{\bm{x}}}
\newcommand{\adv}{{\bm{\epsilon}}}
\newcommand{\advbnd}{{\overline{\epsilon}}}
\newcommand\blfootnote[1]{%
  \begingroup
  \renewcommand*{\footnoteseptext}{}
  \renewcommand\thefootnote{}\footnotetext{#1}%
  \endgroup
}
\title[FI-ODE]{FI-ODE: Certifiably Robust Forward Invariance in Neural ODEs}
\author{%
 \Name{Yujia Huang} $^{*\dagger}$ \Email{yjhuang@caltech.edu} \\
 \Name{Ivan Dario Jimenez Rodriguez} $^{*\dagger}$ \Email{ivan.jimenez@caltech.edu} \\
 \Name{Huan Zhang}$^{\ddag}$ \Email{huan@huan-zhang.com} \\
 \Name{Yuanyuan Shi}$^{\dagger\dagger}$ \Email{yus047@ucsd.edu} \\
 \Name{Yisong Yue} $^{\dagger}$ \Email{yyue@caltech.edu}  \\
 \addr $^\dagger$California Institute of Technology \quad $^{\ddag}$University of Illinois Urbana-Champaign \quad $^{\dagger\dagger}$ UC San Diego  %
}
\begin{document}

\maketitle
\vspace{-35pt}
\begin{abstract}%
Forward invariance is a long-studied property in control theory that is used to certify that a dynamical system stays within some pre-specified set of states for all time, and also admits robustness guarantees (e.g., the certificate holds under perturbations).
We propose a general framework for training and provably certifying robust forward invariance in Neural ODEs.
We apply this framework to provide certified safety in robust continuous control. 
To our knowledge, this is the first instance of training Neural ODE policies with such non-vacuous certified guarantees.
In addition, we explore the generality of our framework by using it to certify adversarial robustness for image classification. 
\end{abstract}

\begin{keywords}%
  neural ODE, forward invariance, robustness%
\end{keywords}

\blfootnote{$^*$ These authors contributed equally to this work.}
\section{Introduction}
We study the problem of training neural networks with certifiable performance guarantees.  
Example performance criteria include safety in control \citep{jin2020neural}, and  adversarial robustness in classification  \citep{wong2018provable,raghunathan2018certified,cohen2019certified}, where even impressive empirical robustness often fails under unforeseen stronger attacks \citep{athalye2018obfuscated}.
As such, having formal performance certificates can be valuable when deploying neural networks in high-stakes real-world settings.

In this paper, we are interested in performance criteria characterized by a property called forward invariance.
Forward invariance has been extensively used in control theory to certify dynamical systems for safety \citep{ames2016control} and robustness under adversarial perturbations \citep{khalil1996robust}.
To use this concept for machine learning, we focus on the Neural ODE (NODE) function class \citep{haber_stable_2017,e_proposal_2017,chen2018neural}, which is a natural starting point for incorporating control-theoretic tools (cf. \cite{yan2019robustness,kang2021stable,liu2020does,jimenez2022lyanet}).
Forward invariance guarantees that NODE trajectories never leave a specified set, which can be translated into various robust safety guarantees.
Given the increasing interest in using NODE policies for robotic control \citep{bottcher2022near,lin2021no}, including those that are forward invariant \citep{rodriguez2022neural}, having certified NODE controllers will become important as those methods are more widely adopted.

\textbf{Our contributions:}
We present FI-ODE, a general approach for training certifiably robust forward invariant NODEs.\footnote{Note that training certifiably forward invariant NODEs even in the non-robust setting is itself a contribution.}
Our approach is based on defining forward invariance using sub-level sets of Lyapunov functions. 
One can train a NODE such that a task-specific cost function (e.g., state-based cost in continuous control, or cross-entropy loss in image classification) becomes the Lyapunov function for the ODE. 
We train with an adaptation of Lyapunov training \citep{jimenez2022lyanet} that focuses on states that are crucial for certifying robust forward invariance.
To make certification practical, we constrain the hidden states of a NODE to evolve on a compact set by projecting the dynamics of NODE to satisfy certain barrier conditions.
We provably verify our method through a combination of efficient sampling and a new interval propagation technique compatible with optimization layers.

We evaluate using a canonical unstable nonlinear system (planar segway).
We demonstrate certified robust forward invariance of the induced region of attraction, which to our knowledge is the first NODE policy with such non-vacuous certified guarantees.
To show generality, we also evaluate on image classification, and show superior $\ell_2$ certified robustness versus other certifiably robust ODE-based models. 
Our code is available at \url{https://github.com/yjhuangcd/FI-ODE.git}.

\section{Preliminaries}
\label{sec:background}

\paragraph{Neural ODEs.}
We consider the following Neural ODE (NODE) model class, where $\bm{x}$ are the inputs to the dynamics, and $\bm{\eta} \in \mathcal{H} \subset \BR^n$ are the states of the NODE ($\mathcal{H}$ is compact and connected). Let  $\bm{\theta} \in \Theta \subseteq \BR^l$ denote the parameters of the learned model. In general, we assume the overparameterized setting, where $\bm{\theta}$ is expressive enough to fit the dynamics.

\begin{subequations}
\begin{align}
    \label{eq:neuralode_input} \bm{\eta}(0) &= \bm{\eta}_0,  \ \ \ \ \ \ \ \ \ \ \ \ \ \ \  \mbox{(initial condition)} \\
    \label{eq:neuralode_hidden} \frac{d\bm{\eta}}{dt} &= \bm{f}_{\bm{\theta}}(\bm{\eta}(t), \bm{x}) \ \ \mbox{(continuum of hidden layers)}.
\end{align}
\end{subequations}
An important setting for NODEs is continuous control, where we can explicitly compose the known dynamics of the physical system with a neural network controller parameterized by $\bm{\theta}$. The closed-loop system is denoted by $\bm{f}_{\bm{\theta}}(\bm{\eta}(t), \bm{x})$, where $\bm{\theta}$ is the neural controller and $\bm{x}$ are the system parameters. Other settings include image classification, where $\bm{x}$ are the images, and we evolve the system over $t \in [0,T]$ to get the final prediction $\bm{\eta}(T)$.\footnote{In this setting, one can think of a NODE as a ResNet \citep{he2016deep} with a continuum of hidden layers.}

\paragraph{Forward Invariance \& Robust Forward Invariance.}
Forward Invariance refers to sets of states of a dynamical system (e.g., \Cref{eq:neuralode_hidden}) where the system can enter but never leave. 
Formally:
\begin{definition}[Forward Invariance]
\label{def:FI}
    A set $\mathcal{S} \subseteq \mathcal{H}$ is forward invariant with respect to the system (\Cref{eq:neuralode_hidden}) if $\bm{\eta}(t) \in \mathcal{S} \Rightarrow \bm{\eta}(t') \in \mathcal{S}, \forall t'\geq t$.
\end{definition}
Forward invariance can be applied generally in NODEs: we can choose the dynamics in \Cref{eq:neuralode_hidden} to render almost any set we choose forward invariant.
For instance, in control we often want to keep the states of the system within a safe set, while in classification we will be concerned with the set of states that produce a correct classification.
Mathematically, these settings can be captured by shaping the ODE dynamics $\bm{f}_{\bm{\theta}}$ to achieve forward invariance within a specified set (i.e., training the ODE to satisfy Definition \ref{def:FI} for some specified set $\mathcal{S}$).

\begin{definition}[Robust Forward Invariance]
    A set $\mathcal{S} \subseteq \mathcal{H}$ is robust forward invariant with respect to $\bm{x}$ if $\mathcal{S}$ is forward invariant with respect to the system $\bm{f}_{\bm{\theta}}(\bm{\eta}(t), \bm{x} + \adv)$, $\forall \adv \in \BR^n$ with $\lVert \adv \rVert \leq \advbnd$. 
\end{definition}

\begin{figure}[t]
  \begin{minipage}[c]{0.45\linewidth}
\centering
\includegraphics[width=\linewidth]{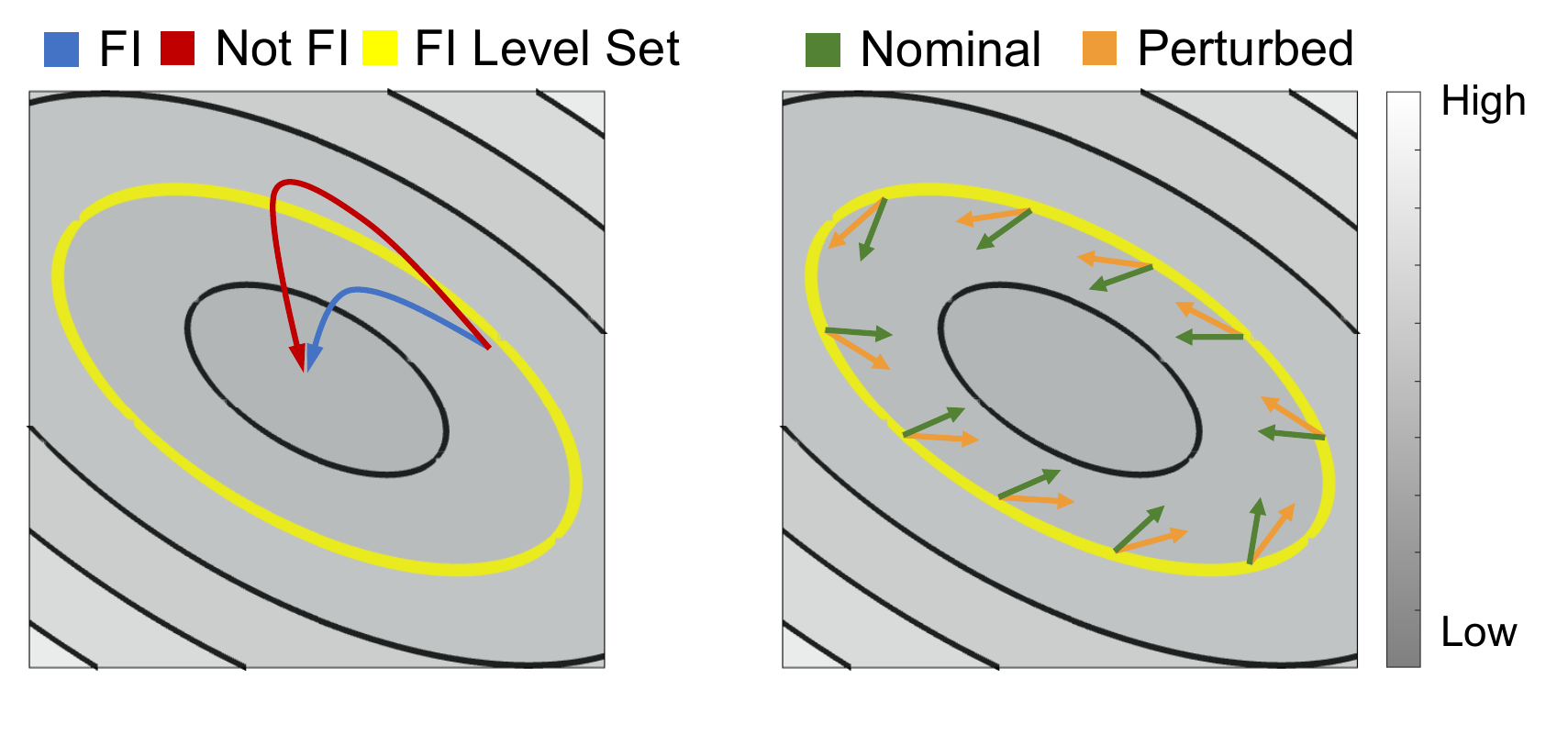}
     \end{minipage}
  \hfill
  \begin{minipage}[c]{0.55\linewidth}
  \caption{
  Depicting trajectories (Left) and dynamics (Right) of the state-space of a NODE. 
  The contours show a quadratic potential, and the yellow-line is the target sublevel set.
  Left:  trajectories that violate (red) or satisfy (blue)  forward invariance.
  Right: flow field (dynamics) of the NODE, under both nominal and perturbed inputs. The perturbed flow field still satisfies forward invariance, implying robust forward invariance. 
  }
  \label{fig:fi-traj}
    \end{minipage}
    \vspace{-10pt}
\end{figure}

Robust forward invariance is attractive when one seeks performance guarantees under input perturbations. Here, we consider norm-bounded perturbations. In control, when there are mis-specifications for system parameters, we still hope the controller to be able to keep the system safe. In classification, we would want the system to classify correctly despite noisy inputs.

\paragraph{Trajectory-wise versus Point-wise Certification Analysis.}
\Cref{fig:fi-traj} depicts two ways of certifying forward invariance. 
On the left, we consider entire trajectories that result from running the ODE (i.e., running the forward pass) and determine forward invariance by checking whether the \emph{trajectories} leave the target set.
Such trajectory-level analyses are computationally expensive due to running ODE integration to generate  trajectories.
This approach also poses a challenge for verification since trajectories can only be integrated for finite time $T$ and the dynamics may be close to leaving the set shortly thereafter ($T+\varepsilon$), which translates into vulnerability to perturbations.

An alternative approach, depicted on \Cref{fig:fi-traj}(right), relies on point-wise conditions: we look at the dynamics \emph{point-wise} over the state-space and infer whether the set within the yellow line is forward invariant.
Here, robust certification can be significantly easier because we only need to verify that the perturbed dynamics are point-wise still pointing in the right direction, rather than analyzing the perturbed dynamics over an entire trajectory (i.e., we do not need to do ODE integration).

\begin{figure*}[t]
\centering
\includegraphics[width=0.95\textwidth,trim={0 0 0 0},clip]{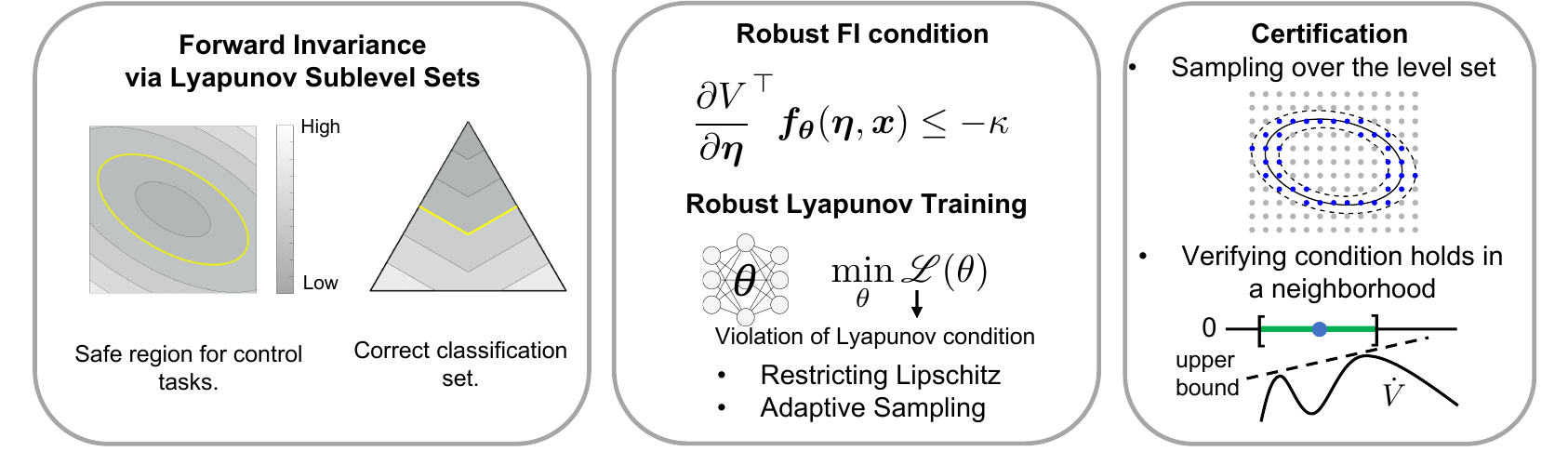}
\vspace{-5pt}
\caption{Overview of our FI-ODE framework. We first pick a Lyapunov function based on the shape of the forward invariant set: the boundaries of the Lyapunov sublevel sets are parallel to the boundary of the forward invariant set. Then we show that robust forward invariance implies robust control and classification. We train the dynamics to satisfy robust FI conditions via robust Lyapunov training. To certify the forward invariance property, we sample points on the boundary of the forward invariant set and verify conditions hold everywhere on the boundary. }
\label{fig:overview}
\end{figure*}

\paragraph{Lyapunov Functions \& Sublevel Sets.} As discussed further in \Cref{sec: FI-sublevel-set}, we use Lyapunov potential functions from control theory to define sets to render forward invariant. A potential function 
$V: \mathcal{H} \to \BR_{\geq 0}$  is a Lyapunov function for the ODE if for all reachable states $\eta$ we have:
\begin{equation}
\label{eq: clean_condition}
    \dot{V} \equiv \frac{d}{dt} V(\bm{\eta}(t)) \equiv \frac{\partial V}{\partial \bm{\eta}}^\top \bm{f_\params}(\bm{\eta}, \bm{x}) \leq 0.
\end{equation}
Intuitively, Equation \eqref{eq: clean_condition} means that the dynamics of the ODE within the states $\mathcal{H}$ are always flowing in the direction that reduces $V$, i.e., Equation \eqref{eq: clean_condition} establishes a contraction condition on ODE. Note that for the system to be strictly contracting, the RHS of Equation \eqref{eq: clean_condition} needs to be strictly negative.

Since $V$ is always decreasing in time, we can use it to define a forward invariant set (Definition \ref{def:FI}): $V(\state)\leq c$ for some constant $c$, which is known as a Lyapunov sublevel set.
Once a state enters a Lyapunov sublevel set it remains there for all time.
The  potential function depicted in \Cref{fig:fi-traj}  can be viewed as a Lyapunov function, and the yellow line the boundary of the corresponding sublevel set. At training time, one would specify a desired forward invariance condition using a potential function $V$ and threshold $c$, and optimize the NODE to satisfy the forward invariance condition.

\section{FI-ODE: Robust Forward Invariance for Neural ODEs}
We now present our FI-ODE framework to enforce forward invariance on NODEs (\Cref{fig:overview}). 
We define forward invariance using Lyapunov sublevel sets (\Cref{sec: FI-sublevel-set}), and show that robust forward invariance implies robust control and classification (\Cref{sec: robust-FI-implication})
To enforce forward invariance, we first train to encourage  the Lyapunov conditions to hold on the boundary of the target Lyapunov sublevel set (e.g., yellow line in \Cref{fig:fi-traj}), and then verify. 
We develop a robust Lyapunov training algorithm  (\Cref{sec:learning}) that extends the LyaNet framework \citep{jimenez2022lyanet} to enable efficiently training NODEs that provably satisfy forward invariance.
Finally, we develop certification tools to verify the Lyapunov conditions everywhere in the region of interest (\Cref{sec: verification}).

\subsection{Forward Invariance via Lyapunov Sublevel Sets}
\label{sec: FI-sublevel-set}

We first define the set that we would like to render forward invariant, and then choose a Lyapunov function whose level sets are parallel to the boundary of the set.\footnote{Usually, Lyapunov stability uses a potential function to prove the stability of a given dynamical system. In our setting, the potential function is pre-defined to be positive definite, and we find a dynamical system (e.g. by training a Neural ODE) that is stable with respect to this potential function (i.e. making this potential function a Lyapunov function). 
This is possible because the NODEs are typically overparameterized.}
For our main application in control, we define the forward invariant set to be a region around an equilibrium point (i.e., straying far from the equilibrium point can be unsafe), and use the standard quadratic Lyapunov function:
\begin{equation}
\label{eq: quadratic_lya}
    V(\bmeta) = \bmeta^\top P \bmeta,
\end{equation}
where $P$ is a (learnable) positive definite matrix, and assuming WLOG that the equilibrium point is at the origin.
The forward invariant set has the form $\mathcal{S} = \{ \bmeta | V(\bmeta) \leq c \}$, for $c>0$. The level sets of this quadratic Lyapunov function are shown in \Cref{fig:fi-traj}. The boundary is then $\mathcal{D} = \partial \mathcal{S} = \{ \bmeta | V(\bmeta) = c \}$.  This forward invariance condition is commonly used in safety-critical control \citep{ames2019control}.

We also explore an application to multi-class classification.  Here, we define the forward invariant set to be the correct classification region. For an input $\bmx$ with label $y$, the output of a  NODE after integrating for $T$ time is $\bmeta(T)$. The NODE correctly classifies $\bmx$ if $y = \argmax \bmeta(T)$. Then the correct classification region for class $y$ is $\mathcal{S}_y = \{ \bmeta | \bmeta \in \triangle, y = \argmax \bmeta \}$ (\Cref{fig:overview}, left panel) 
where $\triangle$ stands for the $n$-class probability simplex: $ \{ \bmeta \in \mathbbm{R}^n | \sum_{i=1}^n \bmeta_i=1, \bmeta_i \geq 0 \}$. 
The boundary of this set is known to be the decision boundary for class $y$: $\mathcal{D}_y = \{ \state \in \triangle | \bmeta_y = \max_{i \neq y} \bmeta_i \}$.
We define a Lyapunov function whose level sets are parallel to the decision boundary:
\begin{equation}
\label{eq: margin_lya}
    \Vy(\bmeta) = 1 - (\bmeta_y - \max_{i \neq y} \bmeta_i )
\end{equation}
We can check that $V_y$ is positive definite: since $0 \leq \eta_i \leq 1$ for all $i$, we have $V_y \geq 0$. In addition, $V_y = 0$ only when $\eta_y = 1$ and $\eta_i = 0$ for $i \neq y$. For the simplicity of notations, we  use $V$ to refer to the Lyapunov function, but note that the Lyapunov function for classification depends on class $y$.

\subsection{Robust Forward Invariance for Robust Control and Classification}
\label{sec: robust-FI-implication}
A NODE satisfies robust forward invariance if the forward invariance condition holds despite (norm-bounded) perturbations on the dynamics (e.g. due to perturbed inputs).
Our framework uses $\bmx$ for \emph{system parameters} in control, and for \emph{input images} in classification. 
To ensure robust forward invariance for perturbed $\bmx$, the dynamics for the perturbed input $\bm{f_\params}(\bm{\eta}, \bmx+\bmeps)$ needs to satisfy the standard forward invariance condition in \eqref{eq: clean_condition} (as informally depicted in \Cref{fig:fi-traj}, right). 
In other words, the condition in \eqref{eq: clean_condition} needs to hold in a neighborhood of $\bmx$ for robust control or classification.
Thanks to the Lipschitz continuity of the Lyapunov function $V$ and the dynamics $\bm{f_\params}$, this can be achieved by a more strict condition than \eqref{eq: clean_condition} on the dynamics (\Cref{thm:robust_FI}).

\begin{theorem}[Robust Forward Invariance]
\label{thm:robust_FI}
Consider the dynamical system in \Cref{eq:neuralode_input,eq:neuralode_hidden}, the set $\mathcal{S}$ will be robust forward invariant with respect to $\bmx$ if the following conditions hold:
\begin{align}
    \frac{\partial V}{\partial \bm{\eta}}^\top \bm{f_\params}(\bm{\eta}, \bm{x}) & \leq - \eps L_{V} L_{f}^x, \quad \forall \bmeta \in \partial \mathcal{S} \label{eq:robust_fi}.
\end{align}
where $\eps$ is the perturbation magnitude on $\bmx$ (i.e. $\lVert \adv \rVert \leq \advbnd$), $\partial \mathcal{S}$ is the boundary of $\mathcal{S}$, $L_{V}$ is the Lipschitz constant of $V$ and $L_f^x$ is the Lipschitz constant of the dynamics with respect to $x$.
\end{theorem}

\begin{remark}[Implications]
    With $\mathcal{S}$ defined as in \Cref{sec: FI-sublevel-set}, if the dynamics satisfy \eqref{eq:robust_fi}, then we have a robust controller that always keeps the system in the desired region despite perturbed system parameters and inputs. 
\end{remark}

\begin{remark}[Non-Robust Variant]
    The non-robust version of \Cref{thm:robust_FI} is where the RHS of \Cref{eq:robust_fi} is 0 instead of $-\eps L_{V} L_{f}^x$.  I.e., \Cref{eq:robust_fi} need not be a strictly contracting condition.
\end{remark}

\subsection{Robust Lyapunov Training}
\label{sec:learning}
We now present our robust Lyapunov training approach to satisfy the conditions in Theorem \ref{thm:robust_FI} (\Cref{alg:training}).
Our method extends the LyaNet framework \citep{jimenez2022lyanet} in two ways: 1) restricting the Lipschitz constant of the NODE with respect to the input; and 2) adaptive sampling to focus learning on the states necessary for forward invariance certification. 

\begin{algorithm2e}[t]
\SetAlCapHSkip{0.0em}
\SetAlgoLined
\SetKwInOut{Input}{Input}\SetKwInOut{Initialize}{Initialize}
\Input{Lyapunov function $V$, Sampling scheduler, dataset $\mathcal{D}$, hinge-like function $\kappa$.}
\Initialize{Model parameters $\bmtheta$, Lyapunov parameters $P$ and system parameters $\ins$ (for control).}
\SetInd{.1em}{0.3em}
\For{$i=1:M$}{
 $\triangleright$ \small{Sample $\bmeta$ based on the training progress and the level sets of $V$:} $\bmeta \sim$  \texttt{Sampling\_scheduler} ($i, V$) \\
 $\triangleright$ For control, find adversarial samples of $\ins$ and $\state$ \\
 $\;\;$ For classification, sample $(\bmx,y) \sim \mathcal{D}$ \\
 $\triangleright$ \small{Update model parameters to minimize Lyapunov loss $\mathscr{L}(\params)$ (\Cref{eq:mc_lya_loss}).} \\
 $P \leftarrow P - \beta' \nabla_{P} \mathscr{L}(\params)$ \\ 
 $\bmtheta \leftarrow \bmtheta - \beta \nabla_{\bmtheta} \mathscr{L}(\params)$ 
}
\Return{$\bmtheta$}
\caption{Robust Lyapunov Training}
\label{alg:training}
\end{algorithm2e}

\paragraph{Training loss.}
Our training loss encourages the dynamics to satisfy the conditions in \Cref{thm:robust_FI}. Specifically, we use a modified Monte Carlo Lyapunov loss from \citep{jimenez2022lyanet}:
\begin{align}
     \mathscr{L}(\params) \approx \underset{ \state \sim \mu\left(\mathcal{H}\right)} {\mathbbm{E}} \left[ \max\left\{0,   \frac{\partial V}{\partial \state }^\top \dyn_\params(\state, \ins) + \kappa(V(\bmeta)) \right\} \right], \label{eq:mc_lya_loss}
\end{align}
which can be interpreted as a hinge-like loss on the Lyapunov contraction condition for each state $\eta$ of the NODE (i.e., the loss encourages $(\partial V/\partial \bmeta)^\top\dyn \leq -\kappa(V(\bmeta)) < 0$ for some non-negative non-decreasing function $\kappa$). Intuitively, if the loss in \Cref{eq:mc_lya_loss} is 0 for some given $\bmx$, then we know that the contraction condition is satisfied with the RHS being $\kappa(V(\bmeta))$. As long as $\kappa(V(\bmeta))\geq \eps L_{V} L_{f}^x$, then \Cref{thm:robust_FI} holds.  If instead $\kappa(V(\bmeta))\geq 0$, then the non-robust variant holds.

\paragraph{Restricting the Lipschitz constant.}
To obtain a non-vacuous guarantee from \Cref{thm:robust_FI}, we need to restrict the Lipschitz of $\dyn_\params(\state, \ins)$ with respect to both $\state$ and $\ins$. For image classification, we can estimate $L_f^x$ easily by the product of matrix norms of the weight matrices in the neural network \citep{tsuzuku2018lipschitz}. Then we certify condition \Cref{eq:robust_fi} holds for the clean image $\ins$.
For control problems, since the dynamics of the physical system is usually known, the closed loop dynamics is not purely parameterized by neural networks and it is not straightforward to estimate $L_f^x$. Therefore, instead of directly certifying condition \Cref{eq:robust_fi}, we certify the LHS of it to be smaller than 0 for $\bmeta \in \partial \mathcal{S}$ and all $\ins$ within the perturbation range (not only on the nominal parameter). We use adversarial training to make $\dyn_\params(\state, \ins)$ smooth with respect to both $\state$ and $\ins$, and in fact certifiable.

\paragraph{Adaptive sampling.}
To minimize the Lyapunov loss (\Cref{eq:mc_lya_loss}), we need to choose a sampling distribution $\mu$. 
A simple choice is uniform (as was done in \citep{jimenez2022lyanet}), but that may require an intractable number of samples to guarantee minimizing~\Cref{eq:mc_lya_loss} everywhere in the state space. 
We address this challenge with an adaptive sampling strategy that focuses training samples on the region of the state space necessary for forward invariance: the boundary of the Lyapunov sub-level set.
For control problems, since we jointly learn matrix $P$ in the Lyapunov function, the shape of its level set changes during training, and the sampled points change accordingly. 
For classification, we switch from uniform sampling in the simplex to sampling only within the forward invariant set, with the switching time being a hyper-parameter of the sampling scheduler. 

\subsection{Certification}
\label{sec: verification}

In the previous section, we minimize the empirical Lyapunov loss (\Cref{eq:mc_lya_loss}) on some finite set of samples to encourage the dynamics to satisfy conditions in \Cref{thm:robust_FI}. However, zero empirical Lyapunov loss on a finite sample is not necessarily a certificate that the conditions hold everywhere on the boundary of the forward invariant. This section develops tools to certify the forward invariance conditions hold \emph{everywhere} on the boundary of the safe set.

\begin{wrapfigure}{R}{0.4\textwidth}
\vspace{-12pt}
  \centering
    \includegraphics[width=1\linewidth, trim={0 0 0 0},clip]{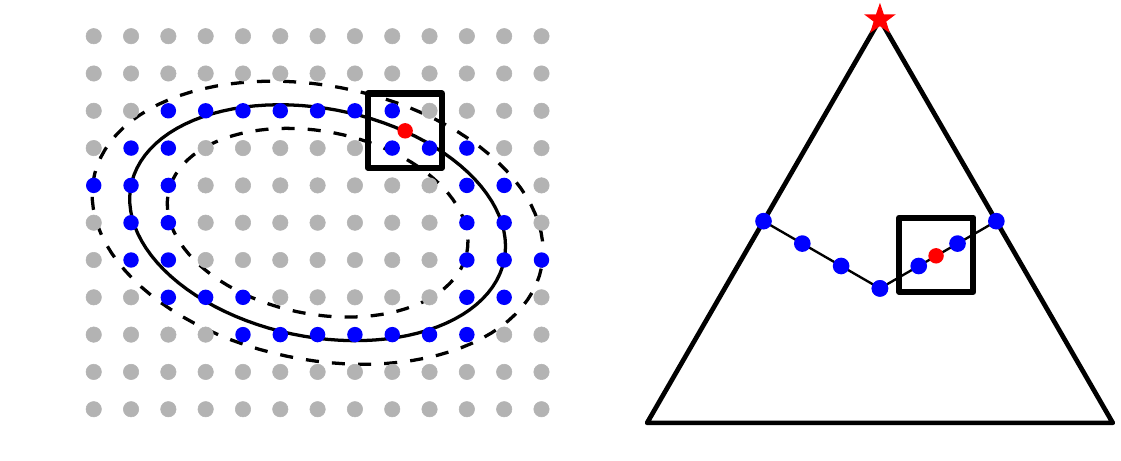}
    \vspace{-16pt}
  \caption{Sampling to cover the level sets of the Lyapunov functions. 
  \label{fig:verification}
  }
  \vspace{-10pt}
\end{wrapfigure}

\paragraph{Certification procedures.}
The certification procedures are as follows: 1) sample points on the boundary of the forward invariant set (blue dots in \Cref{fig:verification}), and check Lyapunov condition holds on all the sampled points; 2) verify the condition holds in a small neighborhood around those points.

\paragraph{Procedure 1: Sampling techniques.}
Rigorous certification is challenging because it requires the set of samples and their neighborhoods to cover the whole boundary of the forward invariant set (not guaranteed by random sampling).
We construct a set for the quadratic Lyapunov function (\Cref{eq: quadratic_lya}) and the classification Lyapunov function (\Cref{eq: margin_lya}) respectively, and show that the proposed set can cover the level set of the corresponding Lyapunov function in Theorem \ref{thm:sample_boundary} below, i.e. for any point on the level set (red dot in \Cref{fig:verification}), there exists a sampled point nearby (blue dot).

To sample on the level set of the quadratic Lyapunov function $\mathcal{D} = \{ \state \in \mathbbm{R}^n | \state^\top P \state = c \}$, we first create a uniform grid  $\mathcal{G}$ (with spacing $r$) in the ambient space that covers the Lyapunov level set. 
We pick $r$ to be at most $\sqrt{\frac{c}{\lambda_1}}$, where $\lambda_1$ is the maximum eigenvalue of $P$. 
Then we do rejection sampling to keep the points that are close to the Lyapunov level set via $\mathcal{G} \cap \mathcal{B}$ for $\mathcal{B}$ defined below:
\begin{equation}
    \mathcal{B} = \{ \state | \underline{c} \leq \state^\top P \state \leq \overline{c} \} \label{eq:def_B} 
\end{equation}
where $\underline{c} = (\sqrt{c} - \frac{\sqrt{n}}{2} r \sqrt{\lambda_1})^2$ and $\overline{c} = (\sqrt{c} + \frac{\sqrt{n}}{2} r \sqrt{\lambda_1})^2$.
We show that $\mathcal{B} \cap \mathcal{G}$ (points inside dashed lines in \Cref{fig:verification}, Left) covers the $c$-level set of the quadratic Lyapunov function (Theorem \ref{thm:sample_boundary} (a)).

To sample on the 1-level set of the $n$-class classification Lyapunov function (the decision boundary), we consider the following set  $\tilde{S}_y$ (\Cref{fig:verification}, Right)  (Theorem \ref{thm:sample_boundary} (b)):
\begin{equation}
\label{eq:sample_simplex}
    \tilde{S}_y = \{ \tilde{\bms} \in \mathbbm{R}^n | \tilde{\bms} = \frac{\bms}{\den}, \bms \in S_y \}
\end{equation}
where $S_y = \{\bms \in \mathbbm{Z}^n | \sum_{i=1}^n \bms_i = \den, \bms_y = \max_{i \neq y} \bms_i, \bms_i \geq 0, \forall i=1,...,n \} $, and $\den$ represents sample density, and needs to be a positive even integer and $\den \not \equiv 1$ (mod $n$).

\begin{theorem}\normalfont{(Sampling on the boundary of a FI set).}
\label{thm:sample_boundary}
\begin{subtheorems}
\item[(a) ] For any $\bmeta \in \mathcal{D}$, there exist an $\bms \in \{ \mathcal{B} \cap \mathcal{G} \}$ such that $|\bmeta_i - {\bms}_i | \leq \frac{r}{2}$ for all $i=1, ..., n$. 
\item[(b) ] For any $\bmeta \in \mathcal{D}_y$, there exists an $\tilde{\bms} \in \tilde{S}_y$ such that $|\bmeta_i - \tilde{\bms}_i | \leq \frac{1}{\den}$ for all $i=1, ..., n$. 
\end{subtheorems}
\end{theorem}

\paragraph{Procedure 2: Verification in a neighborhood around the sampled points.}
Since we only sample a finite number of points on the level set, certifying robust forward invariance requires verifying that the condition holds in a small neighborhood around each of the points. 
We do so by bounding the range of the output given the range of the input.
This bound can be obtained by estimating the Lipschitz constant of the LHS of \Cref{eq:robust_fi}, and the norm of output difference can be bounded by the norm of the input difference. This is convenient for cases where it is simple to bound the Lipschitz of the Lyapunov function and the dynamics. For instance, for classification problems, the Lipschitz constant of the Lyapunov function (\ref{eq: margin_lya}) is $\sqrt{2}$, and the Lipschitz constant of the dynamics with respect to both $\bmeta$ and $\bmx$ is 1 because we use orthogonal layers in the neural network.
For more general Lyapunov functions and dynamics, the Lipschitz bound is often either intractable or vacuous. Instead, we use a popular linear relaxation based verifier CROWN \citep{zhang2018efficient} to bound the output of any general computation graph. In the control case, we verify both $\ins$ and  $\bmeta$ since we want the robustness robustness for a set of perturbations over a set in the state-space.

\section{Experiments}
Our main evaluation is in an application of certified robust forward invariance in nonlinear continuous control (\Cref{sec: results_control}). 
 We also explore the generality of our approach by studying  a second application in certified robustness for image classification (\Cref{sec: results_certified_robustnes}). 

\subsection{Certifying Safety for Robust Continuous Control}
\label{sec: results_control}
\paragraph{Setup.}
We evaluate our framework on a planar segway system, which is a highly unstable nonlinear system whose dynamics is sensitive to its system parameters and therefore hard to train certifiably robust nonlinear controllers  (see \Cref{sec:appen_segway} for the details).
We train a neural network controller (a 3-layer multi-layer perceptron (MLP)) to keep the system forward invariant within the 0.15-sublevel set of a jointly learned Lyapunov function under $\pm 2\%$ perturbations on each system parameter. 
This guarantees that the segway will not fall under adversarial system perturbations.
We evaluate the its performance under both nominal and adversarial system parameters for 1000  adversarially selected initial states within the safe set. The adversarial parameters and states are optimized jointly via projected gradient descent for 100 steps to maximize violations of the forward invariance condition. We also provide provable certificates using certification approach in \Cref{sec: verification}.

\paragraph{Results.}
 We compare with two competitive robust control baselines, and perform ablation studies on different training algorithms in Table \ref{tab:robust_control_results}. 
 We make three main observations.  
 First, it is difficult to certify even non-robust forward invariance using previous methods, highlighting the need for more advanced methods. 
 Second, our robust Lyapunov training approach (Algorithm \ref{alg:training}) is able to train NODE controllers with robust forward invariance certificates.  Third, certifying non-robust forward invariance is easier than certifying robust forward invariance. 
 Overall, these results suggest that our approach is able to train nonlinear ODE controllers in non-trivial settings where existing approaches cannot. To our knowledge, this is also the first instance of training NODE policies with such non-vacuous certified guarantees.
 
\begin{table}[h]
\vspace{-3pt}
  \centering
\caption{Robustness of controllers trained with different methods. The numbers are the percentage of trajectories that stay within the forward invariant set under the nominal and adversarial system parameters on 1000 adversarially selected initial states. The certificate column indicates whether the (robust) FI property is certified.}  
  \resizebox{0.85\linewidth}{!}{  
  \begin{tabular}{lcccc}
    \toprule
    \multirow{2}{*}{\textbf{Method}} & \multicolumn{2}{c}{\textbf{Empirical}} & \multicolumn{2}{c}{\textbf{Certificate}}  \\
    \cline{2-5}
    & \textbf{Nominal} & \textbf{Adv} & \textbf{FI} & \textbf{Robust FI} \\
    \midrule
    Robust LQR & 96.2 & 92.8 & \xmark & \xmark \\
    Robust MBP \citep{donti2020enforcing} & 94.3 & 93.6 & \xmark & \xmark \\
    \midrule
    Standard Backprop Training & 58.0 & 50.4 & \xmark & \xmark \\
     Basic Lyapunov Training \citep{jimenez2022lyanet} & 90.2 & 52.6 & \xmark & \xmark \\     
    \  + Adaptive Sampling & 100 &  68.9 & \cmark & \xmark \\
      \ + Adversarial Training & 100  & 97.8 & \cmark & \xmark \\       
    \textbf{\ + Both (Robust FI-ODE, Ours)} & 100  & 100 & \cmark & \cmark \\ 
    \bottomrule
  \end{tabular}
  }
  \label{tab:robust_control_results}
\end{table}

\paragraph{Visualizations.}
We show trajectories that start within the safe set (gray ellipse) and the corresponding Lyapunov functions in \Cref{fig:segway}. 
The left shows the trajectories of a certifiably \emph{non-robust} FI controller. While the system is safe $100 \%$ under nominal system parameters, it fails for adversarial system parameters. The right shows the trajectories of a certifiably \emph{robust} FI controller. Even under adversarial system parameters, it keeps the trajectories within the safe set $100 \%$ of the time.

\begin{figure}[h]
    \includegraphics[width=\linewidth, trim={0.1cm 0 0 0},clip]{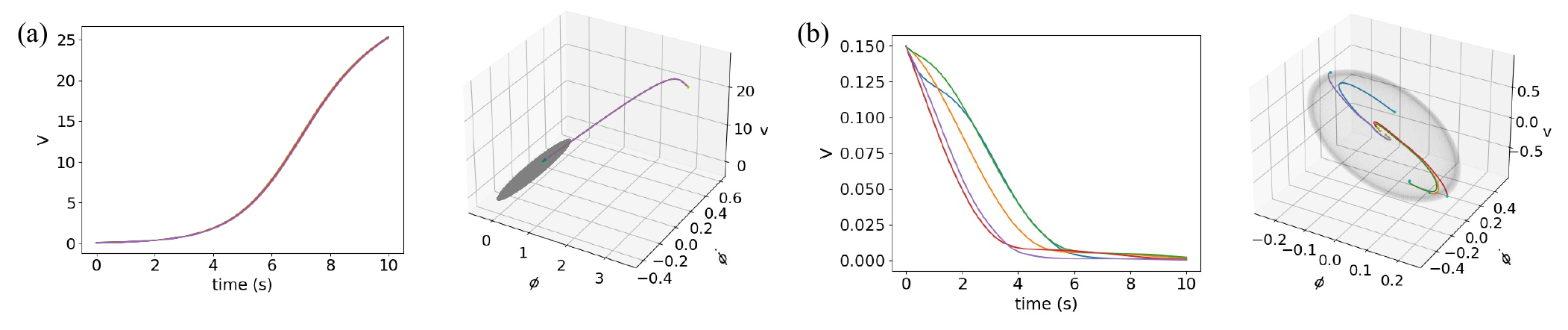}
    \caption{Showing Lyapunov function value $V$ along the trajectories of a planar segway. The forward invariant set is the 0.15-sublevel set. All trajectories start within the forward invariant set (gray ellipse). Each system parameter are perturbed adversarially within $\pm 2 \%$ of their original value.  (a) Shows the Lyapunov function values and system trajectories of a certifiably \emph{non-robust} FI controller. (b) Shows the Lyapunov function values and system trajectories of a certifiably \emph{robust} FI controller. }
    \label{fig:segway}
\end{figure}

\subsection{Certified Robustness for Image Classification}
\label{sec: results_certified_robustnes}
We also apply our approach to train certifiably robust NODEs for image classification to explore the generality of the framework. We treat the input image as system parameters, and set all the initial states to be $\bmeta(0)=\mathbbm{1}\frac{1}{n}$. 
Table \ref{tab:certify-robustness} shows the results. The main metric is certified accuracy, the percentage of test set inputs that are certifiably robust. Our approach achieves the strongest overall certified robustness results compared to prior ODE-based approaches. We also reported clean and adversarial accuracy for references. In addition, the ablation studies shows similar trends as in the robust control experiments: all the learning components: Lyapunov training, Lipschitz restriction and adaptive sampling are needed for good performance, and the model that is trained with all of them (Robust FI-ODE) achieved the highest certified accuracy.

\begin{table}[h]
 \caption{Evaluating certified robustness for image classification. $\epsilon$ is the $\ell_2$ norm of the input perturbations. We report the classification accuracy ($\%$) on clean \& adversarial inputs, and the percentage of inputs that are certifiably robust (Certified). Semi-MonDeq results are on 100 test images [95\% CI in bracket] due to high cost, and other results are on all test images (10,000). }
 \label{tab:certify-robustness}
\centering
\resizebox{0.9\linewidth}{!}{
\begin{tabular}{llcccc}
\toprule
\textbf{Dataset} & \textbf{Method} & $\mathbf{\epsilon}$ & \textbf{Clean} & \textbf{Adversarial} & \textbf{Certified} \\
\hline
 & Lipschitz-MonDeq \citep{pabbaraju2020estimating} & 0.1 & 95.60 & 94.42 & 83.09  \\
 & Semi-MonDeq \citep{chen2021semialgebraic} $^\dagger$ & 0.1 & 99 [>94] & 99 [>94] & 99 [>94] \\
\small{MNIST} &  \textbf{Robust \ours (Ours)} & 0.1 & 99.35 & 99.09 & 95.75 \\
\cmidrule{2-6}
 & Lipschitz-MonDeq \citep{pabbaraju2020estimating} & 0.2 & 95.60 & 93.09 & 50.56 \\
& \textbf{Robust \ours (Ours)} & 0.2 & 99.35 & 98.83 & 81.65 \\
\midrule
 & Lipschitz-MonDeq \citep{pabbaraju2020estimating} & 0.141 & 66.66 & 50.51 & <7.37  \\
 & NODE w/o Lyapunov training & 0.141 & 69.05 & 56.94 & 16.81  \\
\small{CIFAR-$10$} & LyaNet \citep{jimenez2022lyanet} + Lipschitz restriction & 0.141 & 73.15 & 64.87 & 41.43 \\
 & LyaNet \citep{jimenez2022lyanet} + Sampling scheduler & 0.141 & 82.83 & 74.81 & 0 \\
 & \textbf{Robust \ours (Ours)} & 0.141 & 78.34 & 67.45 & 42.27  \\
\bottomrule
\end{tabular}
}
\end{table}
\section{Related Works}\label{sec:related}

\paragraph{Robust control and robust learning-based control methods.} 
Robust control involves creating feedback controllers for dynamic systems that sustain performance under adverse conditions \citep{zhou1998essentials, bacsar2008h}, often relying on basic (linear) controllers.
Our work learns \emph{nonlinear} controllers parameterized by neural networks, while maintaining the robust forward invariance guarantees. There have been recent works for learning-based control with robustness guarantees, such as focusing on  $\mathcal{H}_{\infty}$ robust control \citep{abu2006policy,luo2014off,friedrich2017robust,han2019h,zhang2020policy}, or linear differential inclusions systems \citep{donti2020enforcing}. In comparison, our framework could be used for general nonlinear systems and norm-bounded input/system parameter perturbations.

\paragraph{Learning Lyapunov functions and controllers for nonlinear control problems.}
Various studies focus on learning neural network Lyapunov functions, barrier functions, and contraction metrics for nonlinear control \citep{dawson2023safe}. For stability or safety certification, \citet{chang2019neural} employ SMT solvers \citep{gao2013dreal}, \citet{jin2020neural} use Lipschitz methods, and \citet{dai2021lyapunov} apply mixed integer programming. Our approach uses a linear relaxation-based verifier \citep{zhang2018efficient}, balancing tightness and computational efficiency, to certify nonlinear control policies (unlike the linear policies in \citet{chang2019neural, jin2020neural}) on actual dynamics, contrasting with \citet{dai2021lyapunov}'s neural network dynamic approximations.

\paragraph{Verification and Certified robustness of NODEs.}
Many studies (e.g., \citet{yan2019robustness, kang2021stable, huang2022adversarial}) demonstrate improved empirical robustness of NODEs, yet certifying this robustness is challenging. Prior NODE analyses primarily address reachability:
\citet{grunbacher2021verification} proposes a stochastic bound on the reachable set of NODEs, while
\citet{lopez2022reachability} computes deterministic reachable set of NODEs via zonotope and polynomial-zonotope based methods implemented in CORA \citep{althoff2013reachability}.
However, these methods are limited to low-dimension or linear NODEs.
MonDEQ \citep{winston2020monotone}, akin to implicit ODEs, has seen $\ell_2$ robustness certification efforts \citep{pabbaraju2020estimating, chen2021semialgebraic}, but these struggle beyond MNIST.
\citet{xiao2022forward} propose invariance propagation for stacked NODEs that provides guarantees for output specifications by controller/input synthesis. While their approach focuses more on interpretable causal reasoning of stacked NODEs, our work provides a framework for training and provably certifying general NODEs.

\paragraph{Formal verification of neural networks.}
Formal verification of neural networks aim to prove or disprove certain specifications of neural networks, and a canonical problem of neural network verification is to bound the output of neural networks given specified input perturbations. Computing the exact bounds is a NP-complete problem~\citep{katz2017reluplex} and can be solved via MIP or SMT solvers~\citep{tjeng2017evaluating,ehlers2017formal}, but they are not scalable and often too expensive for practical usage. In the meanwhile, incomplete neural network verifiers are developed to give sound outer bounds of neural networks~\citep{salman2019convex,dvijotham2018dual,wang2018efficient,singh2019abstract}, and bound-propagation-based methods such as CROWN~\citep{zhang2018efficient} are a popular approach for incomplete verification. Recently, branch-and-bound based approaches~\citep{bunel2020branch,wang2021beta,DePalma2021} are proposed to further enhance the strength of neural network verifiers. Our work utilizes neural network verifiers as a sub-procedure to prove forward invariance of NODEs, and is agnostic to the verification algorithm used. We used CROWN because it is efficient, GPU-accelerated and has high quality implementation~\citep{xu2020automatic}.
\section{Conclusion and Future Work}
We introduced FI-ODE, a framework ensuring certifiable robust forward invariance in NODEs. Our work showcases certified robustness in both nonlinear NODE control and image classification, marking a step towards certifying complex NODEs across various domains. Future directions include extending our method to higher-dimensional NODEs and more general Lyapunov functions.

\acks{This work is funded in part by AeroVironment and NSF \#1918865.}

\bibliography{reference}

\newpage
\appendix
\onecolumn


\section{Definitions for Class \texorpdfstring{$\mathcal{K}$}{K} functions}

\begin{definition}[Class $\mathcal{K}$ Function]
A continuous function $\alpha: [0, a) \to [0, \infty)$ for $a \in \BR_{>0} \cup \{\infty\}$ belongs to class $\mathcal{K}$ ($a \in \mathcal{K}$) if it satisfies:
\begin{enumerate}
    \item \textbf{Zero at Zero: } $\alpha(0) = 0$
    \item \textbf{Strictly Increasing: }For all $r_1,r_2 \in [0, a]$ we have that $r_1 < r_2 \Rightarrow \alpha(r_1) < \alpha(r_2)$
\end{enumerate}
\label{def:k}
\end{definition}
\begin{definition}[Class $\mathcal{K}_\infty$ Function]
A function belongs to $\mathcal{K}_\infty$ if it satisfies:
\begin{enumerate}
    \item $\alpha \in \mathcal{K}$
    \item \textbf{Radially Unbounded: }$\lim_{r \to \infty} \alpha(r) = \infty$
\end{enumerate}
\label{def:k_inf}
\end{definition}

\begin{definition}[Extended Class $\mathcal{K}_\infty^e$ Function]
A continuous function $\alpha: \BR \to \BR$ belongs to extended $\mathcal{K}^e_\infty$ if it satisfies:
\begin{enumerate}
    \item \textbf{Zero at Zero: } $\alpha(0) = 0$
    \item \textbf{Strictly Increasing: }For all $r_1,r_2 \in [0, a]$ we have that $r_1 < r_2 \Rightarrow \alpha(r_1) < \alpha(r_2)$
\end{enumerate}
\label{def:k_inf_e}
\end{definition}

\begin{definition}[Class $\mathcal{KL}$ Function]
A continuous function $\beta: [0,a) \times [0, \infty) \to [0, \infty)$ belongs to $\mathcal{KL}$ if it satisfies:
\begin{enumerate}
    \item \textbf{Class $\mathcal{K}$ on first argument: } $\forall s \in [0, \infty) \beta(\cdot,s) \in \mathcal{K}$
    \item \textbf{Asymptotically $0$ on second argument: } $\forall r \in [0,a) \lim_{s \to \infty} \beta(r,s) = 0$
\end{enumerate}
\label{def:kl}
\end{definition}

\section{Forward Invariance on a Probability Simplex}
\label{sec: simplex-arch}

\begin{wrapfigure}{R}{0.3\linewidth} 
    \includegraphics[width=1\linewidth]{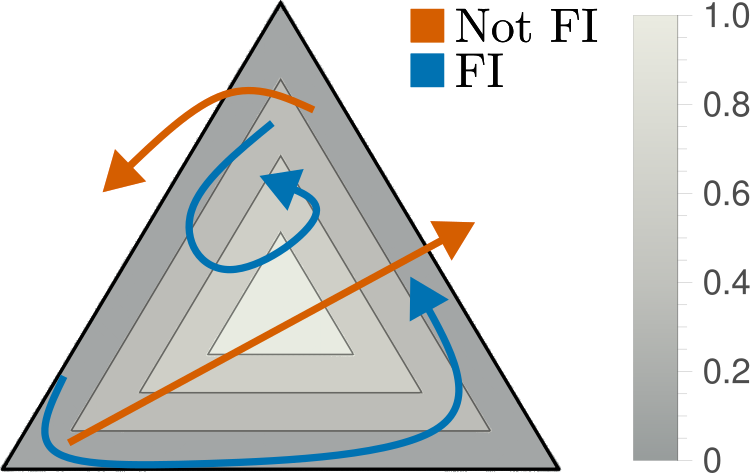}
  \caption{The color contours show level-sets of a barrier function in a 3-class probability simplex.
  \label{fig:CBF}
  }
\end{wrapfigure}

For the purposes of certification and training, it is often useful to make the state space $\mathcal{H}$ be a bounded set, as certifying over unbounded sets is typically intractable. 
For multi-class classification, a natural choice  is the probability simplex. 
Since we initialize $\bmeta$ within the simplex, it suffices to render the simplex to be forward invariant. 
We explicitly constrain the states to a probability simplex using a Control-Barrier Function based Quadratic Program (CBF-QP)\footnote{This is analogous to projected gradient descent (PGD) where we project the dynamics instead of the states.} \cite{ames2016control}, implemented as a differentiable optimization layer \cite{cvxpylayers2019}.

Barrier functions can be viewed as a variant of Lyapunov functions that only require the state to stay within a set rather than  always make progress towards some minimum. 
Specifically, we choose a potential function $h$ with a 0-super level set (i.e. $\left\{ \state \in \mathcal{H}  |h(\state) \geq 0 \right\}$) equal to the desired forward invariant set $\mathcal{S}$ (see \Cref{fig:CBF} for an example).
Similarly to the Lyapunov case, there is a point-wise inequality condition that must be true over the forward invariant set: 
\begin{align}
    \frac{d}{dt} h(\bm{\eta}(t)) \geq -\alpha(h(\bm{\eta}))
    \label{eq:short_barrier}
\end{align}
where $\alpha: \BR_{\geq 0} \to \BR_{\geq 0}$ is a class $\mathcal{K}_\infty$ function.
Intuitively, all the flows on the boundary of the forward invariant set must have a positive time-derivative (otherwise there could be a point on the boundary that decreases the value of $h$ and thus exits the forward invariant set).
This is the essence of Nagumo's theorem \cite{nagumo1942lage}. 
Barriers extend this idea with a condition that can be applied everywhere in the target forward invariant set without being overly conservative.
As trajectories approach the boundary of the set, \Cref{eq:short_barrier} ensures the time derivative increases until it is positive at the boundary. We use a variation of barrier functions called Control Barrier Functions (CBF). We formalize this concept with \Cref{thm:cbf}.

In our case, the unconstrained dynamics is the output of a neural network and we denote it as $\hat{f}(\bmeta, \bmx)$. To make the dynamics satisfy the barrier conditions, we use a Control Barrier Function Quadratic Program (CBF-QP) Safety Filter \cite{gurriet2018towards}: 
\begin{subequations}
\begin{align}
    f(\hat{\bmf}) &= \argmin_{\mathtt{f} \in \BR^n} \frac{1}{2}\lVert \mathtt{f} - \hat{\bmf} \rVert_2^2 \label{eq:cbfqp_a}\\
    & \text{s.t} \quad \quad \mathbbm{1}^\top \mathtt{f} = 0  \label{eq:cbfqp_b}\\
    & \quad \quad \quad \mathtt{f} \geq - \alpha(\bmeta)  \label{eq:cbfqp_c}
\end{align}
\end{subequations}
where the arguments to the function $\hat{f}$ are omitted for brevity. 

Recall that an $n$-class probability simplex is defined as $\triangle = \{ \bmeta \in \mathbbm{R}^n | \sum_{i=1}^n \bmeta_i=1, \bmeta_i \geq 0 \}$. 
Now we show that \Cref{eq:cbfqp_b} ensures that the sum of the state stays to be 1 and \Cref{eq:cbfqp_c} guarantees the state to be non-negative. 

First, we need the sum of $\bmeta$ stays the same as the initial condition. Taking time derivative of both sides of $\sum_{i=1}^n \bmeta_i=1$, we have $\frac{d}{dt}\left( \sum_{i=1}^n \bmeta_i(t)\right) = \sum_{i=1}^n f_\bmtheta(\bmeta, \bmx)_i = \mathbbm{1}^\top f_\bmtheta(\bmeta, \bmx) = 0$, which is \Cref{eq:cbfqp_b}. This is natural because the dynamics summing up to zero means the changes from all dimensions summing up to zero, and thus the sum of all dimensions stays the same.

Next, we need each dimension of the state to be non-negative. Since the initial condition has non-negative entries, we just need the set $\{ \bmeta | \bmeta \geq 0 \}$ to be forward invariant. We define forward invariance via barrier functions. For each dimension $i$, we define $h_i(\bmeta) = \bmeta_i$. Then the 0-superlevel set of $h_i$ equals the safe set $\{ \bmeta | \bmeta_i \geq 0 \}$.
As long as the condition in \Cref{eq:short_barrier} holds, i.e. $\frac{d h_i}{d\bmeta}f_\params(\bmeta, \bmx) \geq - \alpha (h_i(\bmeta))$ for some class $\mathcal{K}_\infty$ function $\alpha$, the set $\{ \bmeta | \bmeta_i \geq 0 \}$ is forward invariant.
Plugging in $h_i(\bmeta)$, we have $f_\params(\bmeta,\bmx) \geq -\alpha( \bmeta)$, which is \Cref{eq:cbfqp_c}.

To learn in this setting we differentiate through the QP layer using the KKT conditions as shown in \cite{cvxpylayers2019}. Given the simple nature of the QP, we implemented a custom solver that uses binary search to efficiently compute solutions, detailed in the supplymentary materials.

To demonstrate the effectiveness of the CBF-QP layer, we visualize the learned trajectories on the CIFAR-3 dataset (a subset of CIFAR-10 with the first 3 classes) in \Cref{fig:trajectories}. Each colored line represents a trajectory of an input image from a specific class.
As training progresses, the trajectories are trained to evolve to the correct classes. All the trajectories stay in the simplex, implying that the learned dynamics satisfy the constraints in \Cref{eq:cbfqp_b} and \Cref{eq:cbfqp_c}.

\begin{figure}[h]
\centering
\includegraphics[width=0.6\linewidth]{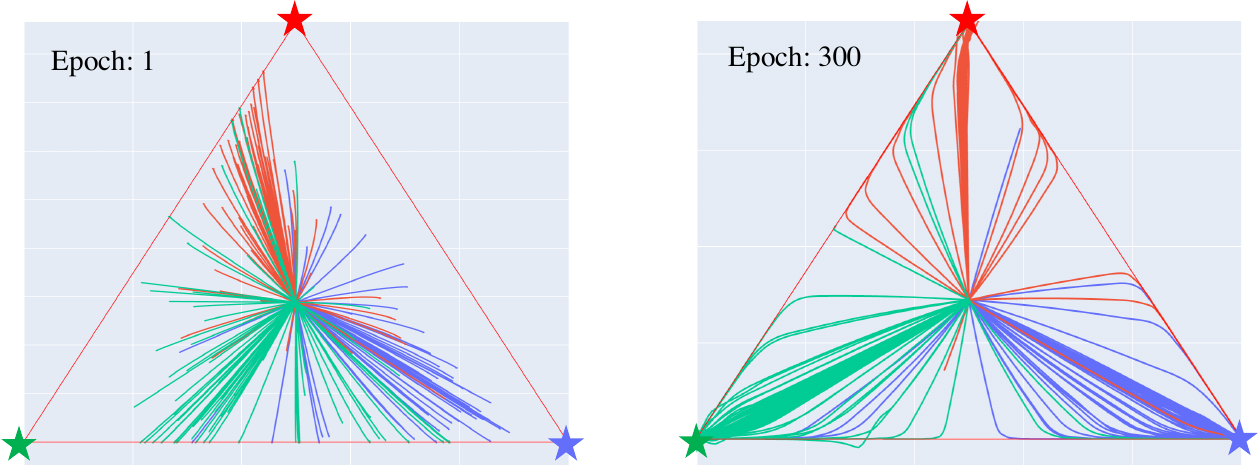}
\caption{Depicting ODE trajectories that satisfy the simplex constraint for CIFAR-3 on epochs 1 and 300. Each colored line represents the trajectory of an input example of a specific class, and the stars at the corners are colored with the ground-truth class.}
\label{fig:trajectories}
\end{figure}

\section{Theorems with Proof}
\begin{theorem}[ES-CLF Implies Exponential Stability \cite{ames2014rapidly}]
\label{thm:es-clf}
For the ODE in \cref{eq:neuralode_input,eq:neuralode_hidden}, a continuously differentiable function $V: \BR^k \to \BR_{\geq 0}$ is an Exponentially Stabilizing Control Lyapunov Function (ES-CLF) if there are class $\mathcal{K}_\infty$ functions $\overline{\sigma}$ and $\kappa$ such that:
\begin{gather}
    V(\bm{\eta}) \leq \overline{\sigma}(\lVert \bm{\eta} \rVert),\\
    \min_{\bm{\theta} \in \Theta} \left[ \frac{\partial V}{\partial \bm{\eta} }^\top \bm{f_\params}(\bm{\eta}, \bm{x}) + \kappa(V(\bm{\eta})) \right] \leq 0 \label{eq:lya_cond}
\end{gather}
holds for all $\bm{\eta}\in \mathcal{R} \subseteq H$ and $t \in [0, 1]$. The existence of an ES-CLF implies that there is a $\bm{\theta} \in \Theta$ that can achieve:
\begin{align}
    \frac{\partial V}{\partial \bm{\eta}}^\top \bm{f_\params}(\bm{\eta}, \bm{x}) + \kappa (V(\bm{\eta})) \leq 0,
    \label{eq:lyapunov_achieved_bound}
\end{align}
and furthermore the ODE using $\params$ is exponentially stable with respect to $V$, i.e., $V(\state(t)) \leq V(\state(0))e^{-\underline{\kappa} t}$ for some $\underline{\kappa} > 0$.
\label{thm:lyapunov}
\end{theorem}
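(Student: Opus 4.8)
The plan is to prove the theorem in two movements. First, I would extract from the ES-CLF inequality \cref{eq:lya_cond} a \emph{single, fixed} parameter $\bm{\theta}\in\Theta$ for which the pointwise decrease condition \cref{eq:lyapunov_achieved_bound} holds at every $\bm{\eta}\in\mathcal R$ (and every $\bm{x}$ in the relevant input set). Second, I would run a comparison-lemma argument on the scalar quantity $t\mapsto V(\bm{\eta}(t))$ to convert that pointwise differential inequality into the claimed exponential bound, while checking along the way that the trajectory never leaves $\mathcal R$ so that the inequality stays applicable.

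For the first movement, note that \cref{eq:lya_cond} only asserts, for each fixed $\bm{\eta}$, that the minimizing $\bm{\theta}$ makes the bracket nonpositive; the content to establish is that such a minimizer can be chosen \emph{uniformly} in $\bm{\eta}$. I would handle this exactly as the min-norm controller is handled in the CLF literature: the set-valued map $\bm{\eta}\mapsto\{\bm{\theta}\in\Theta:\ \partial_{\bm{\eta}}V|_{\bm{\eta}}\,\bm{f}_{\bm{\theta}}(\bm{\eta},\bm{x})+\kappa(V(\bm{\eta}))\le 0\}$ has nonempty values by hypothesis, and under continuity of $\bm{f}_{\bm{\theta}}$ in $(\bm{\theta},\bm{\eta})$ together with compactness of $\mathcal R$ and $\Theta$ it admits a measurable (with more regularity, Lipschitz) selection; alternatively one appeals to the richness/overparameterization of $\Theta$, so that a single network realizes a stabilizing vector field. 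I expect this uniform-selection step to be the main obstacle, since the conclusion needs one network valid everywhere whereas \cref{eq:lya_cond} is only pointwise; the rest of the argument is routine.

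For the second movement, fix the $\bm{\theta}$ from the previous step, specialize to the exponential case $\kappa(r)=\kappa r$ with a constant $\kappa>0$ (as the notation $e^{-\kappa t}$ in the conclusion indicates), and set $W(t):=V(\bm{\eta}(t))$. Since $V$ is $C^1$ and $\bm{\eta}(\cdot)$ solves \cref{eq:neuralode_hidden}, the chain rule gives $\dot W(t)=\partial_{\bm{\eta}}V|_{\bm{\eta}(t)}\,\bm{f}_{\bm{\theta}}(\bm{\eta}(t),\bm{x})\le -\kappa(V(\bm{\eta}(t)))=-\kappa W(t)$ as long as $\bm{\eta}(t)\in\mathcal R$. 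Because $-\kappa W\le 0$, $W$ is nonincreasing, so $\bm{\eta}(t)$ stays in the sublevel set $\{V\le V(\bm{\eta}(0))\}$; taking $\mathcal R$ to contain this sublevel set (or restricting the region of attraction to it) makes the differential inequality valid for all $t$ on the horizon. A standard comparison-lemma argument then bounds $W$ by the solution of $\dot u=-\kappa u$, $u(0)=W(0)$, namely $u(t)=W(0)e^{-\kappa t}$, yielding $V(\bm{\eta}(t))\le V(\bm{\eta}(0))e^{-\kappa t}$; for a general class-$\mathcal K$ $\kappa$ the same comparison argument gives decay governed by the scalar flow $\dot u=-\kappa(u)$, recovering \cref{def:stability}. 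Finally, combining this with the assumed bound $V(\bm{\eta})\le\overline{\sigma}(\lVert\bm{\eta}\rVert)$ (and, in the quadratic case $V^{sq}$, a matching lower bound in $\lVert\bm{\eta}-\bm{\eta}^*\rVert$) translates decay of $V$ into decay of the state, completing the stability claim.
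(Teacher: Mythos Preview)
The paper does not supply its own proof of \Cref{thm:es-clf}; the result is imported from \cite{ames2014rapidly} and stated as background. Your second movement---chain rule plus the scalar comparison lemma to turn $\dot W\le -\kappa W$ into $W(t)\le W(0)e^{-\kappa t}$, together with the sublevel-set invariance check that keeps the trajectory inside $\mathcal R$---is exactly the standard argument, and indeed the paper invokes the same comparison-lemma step verbatim in its proof of \Cref{thm:robust_classification} (citing the appendix of \cite{jimenez2022lyanet}). So on that half there is nothing to compare: you and the paper are aligned.

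Your first movement contains a conceptual slip worth flagging. A measurable or Lipschitz selection from the set-valued map $\bm{\eta}\mapsto\{\bm{\theta}\in\Theta:\text{bracket}\le 0\}$ yields a \emph{function} $\bm{\theta}(\bm{\eta})$, not a single element of $\Theta$. That is the correct object in the original CLF setting of \cite{ames2014rapidly}, where the optimization variable is a control input and one seeks a state-feedback law; but it does not by itself produce the fixed parameter vector the neural-ODE version of the statement asks for. Your alternative appeal to ``richness/overparameterization of $\Theta$'' is really an added hypothesis, not a proof step. You correctly identify this as the only non-routine part; the paper, for its part, sidesteps the abstract existence question entirely and treats it operationally---it \emph{trains} $\bm{\theta}$ to drive the Lyapunov loss \cref{eq:lya_loss} to zero and then \emph{verifies} \cref{eq:lyapunov_achieved_bound} a posteriori (\Cref{sec: learning,sec: verification}).
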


\begin{proof}
    Since $\Theta$ is compact, minimums are attained within $\Theta$. Let $\sparams(\bm{\eta}) = \argmin_{\bm{\theta} \in \Theta} \frac{\partial V}{\partial \bm{\eta} }^\top \bm{f_\params}(\bm{\eta}, \bm{x})$. Therefore, from \cref{eq:lya_cond} we can conclude that:

    \begin{align}
        \frac{\partial V}{\partial \bm{\eta} }^\top \bm{f}_{\sparams(\bm{\eta})}(\bm{\eta}, \bm{x}) \leq - \kappa (V(\bm{\eta})) & \quad \quad \forall \bm{\eta}\in \mathcal{H}
    \end{align}
    For simplicity we will omit the arguments of $\sparams$. Furthermore, in the case where $\sparams$ is a set we will select only one.
    Since $\mathcal{H}$ is compact then
    \begin{align}
        \sigma^* = \max_{\state \in \mathcal{H}} \overline{\sigma}(\lVert \state \rVert).
    \end{align}
    This in turn implies that $V$ in bounded in $\mathcal{H}$ which helps us conclude that 
    \begin{align}
        \overline{V} = \max_{\bm{\eta} \in \mathcal{H}}V(\bm{\eta})
    \end{align}
    is well defined which in turn implies 
    \begin{align}
        \underline{\kappa} = \min_{r \in [0, \overline{V}]} \frac{d \kappa(r)}{dr}
    \end{align}
     is well defined.
     Since $\kappa$ is strictly increasing, then $\underline{\kappa} > 0$. Notice that $\alpha(r) = \underline{\kappa} r$ satisfies $\alpha \in \mathcal{K}_\infty$ and, by the comparison lemma, $\forall r, \underline{\kappa} r \leq \kappa(r) $. Therefore:
     \begin{equation}
        \frac{\partial V}{\partial \bm{\eta} }^\top \bm{f}_{\sparams(\bm{\eta})}(\bm{\eta}, \bm{x}) \leq - \kappa (V(\bm{\eta})) \leq - \underline{\kappa} V(\bm{\eta}), \quad \quad \forall \bm{\eta}\in \mathcal{H} 
    \end{equation}
    In preparation for applying the comparison lemma we will consider the following Initial Value Problem (IVP):

    \begin{align}
        y(0) &= V(\bm{\eta}_0) \label{eq:cl_in_init}\\
        \dot{y} &= -\underline{\kappa}y \label{eq:cl_in_dyn}
    \end{align}
    Since this is a linear system solutions for $y$ exist, are unique and take the form $y(t) = V(\bm{\eta}_0) e^{-\underline{\kappa} t}$.
    Furthermore, by the comparison lemma we can conclude that:
    \begin{align}
        V(\state(t)) \leq y(t) = V(\bm{\eta}_0) e^{-\underline{\kappa} t}
    \end{align}

\end{proof}

\begin{lemma}[Solution of Class $\mathcal{K}$ function systems (See Lemma 4.4 in \cite{khalil2002nonlinear})]
Let $\alpha \in \mathcal{K}_\infty^e$. Then consider the following IVP for $t \in [0,1]$:
\begin{align}
    y(0) &= y_0 \\
    \dot{y} &= -\alpha(y)
\end{align}
This IVP has unique solutions $y(t) = \beta(y_0,t)$ where $\beta \in \mathcal{KL}$.
\label{thm:cl_soln_beta}
\end{lemma}

\begin{theorem}[CBF Existence Implies Forward Invariance \cite{xu2015robustness,nagumo1942lage}]
    \label{thm:cbf}
    Let the set $\mathcal{S} \subset \mathcal{H}$ be the 0 superlevel set of a continuously differentiable function $h: \mathcal{H} \to \BR$, i.e. $\mathcal{S} = \{ \state \in \mathcal{H} \rvert h(\state) \geq 0  \}$.  The set $\mathcal{S}$ is forward invariant with respect to the ODE \cref{eq:neuralode_input,eq:neuralode_hidden}, if $h$ is a Control Barrier Function (CBF) i.e. it satisfies either of the following  conditions:
    \begin{enumerate}
        \item \cite{xu2015robustness} There exists a function $\alpha$ for all $\state \in \mathcal{S}$ so that:
        \begin{gather}
        \label{eq:cbf}
        \max_{\params \in \Theta} \left[ \frac{\partial h}{\partial \state}^\top \dyn_\params(\state, \ins) + \alpha( h(\state)) \right] \geq 0,
        \end{gather}
        where $\alpha$ is a class $\mathcal{K}_\infty^e$ function (this means $\alpha : \BR \to \BR$ is strictly increasing and satisfies $\lim_{r \to \infty} \alpha(r) = \infty$).
        \item \cite{nagumo1942lage} For all $\state \in \{ \state \in \mathcal{S} \rvert h(\state) = 0 \}$:
        \begin{subequations}\label{eq:nagumo}
        \begin{align}
            & \frac{\partial h}{\partial \state } \neq \bm{0}  \\
            & \max_{\params \in \Theta} \left[ \frac{\partial h}{\partial \state}^\top \dyn_\params(\state, \ins) \right]\geq 0.  
        \end{align}
        \end{subequations}
    \end{enumerate}
    
    \end{theorem}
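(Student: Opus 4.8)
The plan is to reduce both sufficient conditions to a one-dimensional statement about the scalar $v(t) := h(\state(t))$ along a trajectory of the closed loop. First I would fix a $\params \in \Theta$ for which the bracketed quantity in the condition under consideration is nonnegative (such a $\params$ exists by hypothesis; in the CBF-QP instantiation $\params$ is selected pointwise by the optimization layer, and the argument below only uses the pointwise inequality evaluated along the realized trajectory), and work with the ODE $\dot{\state} = \dyn_\params(\state, \ins)$, assuming as usual that its trajectory $\state(\cdot)$ is defined on $[0,T]$ and stays in $\mathcal{H}$. Note $\mathcal{S} = \{h \ge 0\}$ is closed (hence compact, as $\mathcal{H}$ is). By time translation it suffices to show $h(\state(0)) \ge 0 \Rightarrow h(\state(t)) \ge 0$ for all $t \in [0,T]$. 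Since $h$ and $\state(\cdot)$ are $C^1$, so is $v$, with $\dot v(t) = \frac{\partial h}{\partial \state}\big|_{\state(t)}\,\dyn_\params(\state(t),\ins)$.

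\emph{First (Xu-type) condition.} Here the hypothesis directly gives the differential inequality $\dot v(t) \ge -\alpha(v(t))$ for all $t$. I would introduce the scalar comparison system $\dot u = -\alpha(u)$ with $u(0) = v(0) \ge 0$. Since $\alpha$ is class $\mathcal{K}_\infty^e$, $\alpha(0) = 0$ and $\alpha > 0$ on $(0,\infty)$, so $u \equiv 0$ is an equilibrium and the solution from any nonnegative initial value stays nonnegative for all $t \ge 0$. The comparison lemma (e.g. Khalil, Lemma 3.4 — using that $\alpha$ is locally Lipschitz, as is standard for $\mathcal{K}_\infty^e$ functions; the merely continuous case follows by sandwiching $\alpha$ from below by smooth $\mathcal{K}_\infty^e$ functions) then gives $v(t) \ge u(t) \ge 0$, i.e. $\state(t) \in \mathcal{S}$. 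This case is essentially one line once the comparison lemma is invoked.

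\emph{Second (Nagumo) condition.} Now the inequality holds only on $\partial\mathcal{S} = \{h = 0\}$ and with no slack, so no comparison function is available; this is the technical heart, and I would argue by contradiction. If $v(t_2) < 0$ for some $t_2 \in (0,T]$, set $t_1 := \sup\{t \le t_2 : v(t) \ge 0\}$, so $t_1 \in [0,t_2)$, $v(t_1) = 0$, $\state(t_1) \in \partial\mathcal{S}$, and $\state(t) \notin \mathcal{S}$ for $t \in (t_1, t_2]$. Consider $d(t) := \mathrm{dist}(\state(t), \mathcal{S})$, which is continuous with $d(t_1) = 0$. Because $\state(t) \to \state(t_1)$ as $t \to t_1^+$ and $\nabla h$ is nonzero on $\partial\mathcal{S}$, near $\state(t_1)$ the boundary is (by the implicit function theorem) a $C^1$ hypersurface, with $\mathcal{S}$ on its $\{h \ge 0\}$ side, along which the outward proximal normal cone of $\mathcal{S}$ at a point $p$ is $\BR_{\geq 0}\cdot(-\nabla h(p))$; hence, shrinking $\delta$ so that $d(t)$ is small, for $t \in (t_1, t_1+\delta]$ a nearest point $p(t) \in \mathcal{S}$ to $\state(t)$ lies on $\partial\mathcal{S}$ and satisfies $\state(t) - p(t) = -\lambda(t)\,\nabla h(p(t))$ with $\lambda(t) > 0$. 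From $d(t+s) \le \lVert \state(t+s) - p(t) \rVert$ one obtains the upper right Dini-derivative bound $D^+ d(t) \le \big\langle \tfrac{\state(t)-p(t)}{d(t)},\, \dyn_\params(\state(t),\ins)\big\rangle$. Splitting $\dyn_\params(\state(t),\ins) = \dyn_\params(p(t),\ins) + \big(\dyn_\params(\state(t),\ins) - \dyn_\params(p(t),\ins)\big)$, the first term is $\le 0$ by the boundary hypothesis $\nabla h(p(t)) \cdot \dyn_\params(p(t),\ins) \ge 0$, and the second is $\le L\,d(t)$ with $L$ a local Lipschitz constant of $\dyn_\params$ in $\state$. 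Hence $D^+ d(t) \le L\,d(t)$ on $(t_1, t_1+\delta]$ with $d(t_1) = 0$, and a Gr\"onwall/comparison step forces $d \equiv 0$ there, contradicting $d(t) > 0$. So no such $t_2$ exists and $\mathcal{S}$ is forward invariant.

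I expect the remaining loose ends to be only routine regularity hypotheses: existence of the trajectory on $[0,T]$ within $\mathcal{H}$ (Peano together with the invariance itself and compactness of $\mathcal{H}$), local Lipschitzness of $\dyn_\params$ in $\state$ (used only for the Gr\"onwall step in the Nagumo case, automatic for smooth neural ODEs), and the $C^1$ / nonvanishing-gradient assumptions on $h$, which are exactly what make $\partial\mathcal{S}$ a $C^1$ hypersurface so that the proximal-normal identification is valid. The main obstacle is precisely that Nagumo-case bookkeeping — establishing the proximal-normal characterization near $\partial\mathcal{S}$ and deriving $D^+ d \le L\,d$; by contrast the first condition reduces to a textbook comparison-lemma argument.
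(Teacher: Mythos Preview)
The paper does not supply its own proof of this theorem: it is quoted as a background result with attribution to \cite{xu2015robustness} and \cite{nagumo1942lage}, and no argument appears in either the main text or the appendix. There is therefore nothing in the paper to compare your proposal against.

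On its merits, your sketch is a correct reconstruction of the classical arguments behind those citations. Condition~1 reduces, as you say, to the scalar comparison lemma applied to $v(t)=h(\state(t))$ against $\dot u=-\alpha(u)$, with nonnegativity of $u$ following from $\alpha(0)=0$ and uniqueness. For condition~2 your distance-function route is one of the standard proofs of Nagumo's theorem for sets with $C^1$ boundary: the proximal-normal identification $\state(t)-p(t)\parallel -\nabla h(p(t))$ is exactly what the nonvanishing-gradient hypothesis and the implicit function theorem buy you, and the split of $\dyn_\params(\state(t),\ins)$ into its value at $p(t)$ plus a Lipschitz remainder correctly yields $D^+ d(t)\le L\,d(t)$. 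Two small points worth tightening when you write it out: the Dini-derivative bound is asserted only on the open interval $(t_1,t_1+\delta)$ where $d>0$, so the Gr\"onwall step is cleanest phrased via $\phi(t)=d(t)e^{-L(t-t_1)}$ having $D^+\phi\le 0$ there and being continuous at $t_1$; and the local Lipschitz hypothesis on $\dyn_\params$ in $\state$ is genuinely needed, since with merely continuous dynamics Nagumo's theorem yields only \emph{weak} forward invariance (existence of some solution remaining in $\mathcal{S}$), not invariance of every solution. Both are precisely the routine regularity issues you already flag, so there is no substantive gap.
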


\begin{proof}
    Both conditions follow from fundamentally different arguments. Condition 1 follows from the comparison lemma. Condition 2 uses Nagumo's theorem. In either case we rely on the compactness of $\Theta$ to solve the following optimization problem:
    \begin{align}
        \sparams(\state) = \argmax_{\params \in \Theta} \left[ \frac{\partial h}{\partial \state}^\top \dyn_\params(\state, \ins) \right]
    \end{align}
    We will omit the parameters of $\sparams$ for brevity and choose a random solution in the case where $\sparams$ returns a set of solutions.

    \begin{enumerate}
        \item Consider the following IVP:
        \begin{align}
            y(0) &= h(\state(0)) \\
            \dot{y} &= -\alpha(y)
        \end{align}
        which satisfies the conditions of \cref{thm:cl_soln_beta}. This implies that $y(t)$ is unique and $y(t) = \beta(h(\state(0)), t)$ where by the assumption of forward invariance $h(\state(0)) \geq 0$. The by a trivial variant of the comparison lemma we have that $\dot{h}(\state(t)) \geq - \alpha(h(\state))$ implies $h(\state(t)) \geq \beta(h(\state(0)), t)$ which implies $h(\state(t)) \geq 0$ for $t \in [0,1]$
        \item This is a direct application of Nagumo's theorem \cite{nagumo1942lage}.
    \end{enumerate}
    
\end{proof}

\subsection{Proof of \texorpdfstring{\Cref{thm:robust_FI}}{Robust FI Section}}
\label{proof:robust_classification}
\begin{proof}

Define barrier function $h$ as $h = c - V$, then $\mathcal{S}$ is a 0-superlevel set of $h$. According to Nagumo's theorem \cite{nagumo1942lage}, if $\dot{h} (\ins+\bmeps; \bmeta) \geq 0$ on $\partial \mathcal{D} \coloneqq \mathcal{S}$, then $\mathcal{S}$ is forward invariant. Since $\dot{V} (\bmeta; \ins) \leq - L_V L_f^x \eps$ on $\mathcal{D}$, we have $\dot{h}(\bmeta; \ins) \geq L_h L_f^x \eps$ on $\mathcal{D}$ (where $L_h$ is the Lipschitz constant of $h$ and notice that $L_V=L_h$). Then for the perturbed input, we have 
\begin{align}
    \dot{h}(\bmeta; \ins+\bmeps) &= \dot{h}(\bmeta;\ins) + \dot{h}(\bmeta;\ins+\bmeps) - \dot{h}(\bmeta;\ins) \\
    & \geq \dot{h}(\bmeta;\ins) - \| \dot{h}(\bmeta;\ins+\bmeps) - \dot{h}(\bmeta;\ins) \| \\
    &\geq \dot{h}(\bmeta; \ins) - L_h L_f^x \eps \\
    & \geq \kappa \underline{V} - L_h L_f^x \eps \geq 0 
\end{align}
Therefore, $\mathcal{S}$ is still forward invariant for the perturbed inputs with perturbation magnitude smaller than $\eps$.
\end{proof}

\subsection{Proof of \texorpdfstring{\Cref{thm:sample_boundary}}{Sample Simplex Theorem (b)}}
\label{proof:sample_boundary}
\begin{proof}
\textbf{(a) Sampling on the level set of a quadratic Lyapunov function.} Consider the Lyapunov function of the form $V(\bmeta) = \bmeta ^\top P \bmeta$, where $P$ is a positive definite matrix. Let $\mathcal{D} = \{ \state \in \mathbbm{R}^n | \state^\top P \state = c \}$ be the $c$-level set of the Lyapunov function, and let $\mathcal{G}$ be a uniform grid (with spacing $r$) that covers the Lyapunov level set, i.e. $\min_{\bmeta \in \mathcal{D}} \bmeta_i \geq  \min_{\bmeta \in \mathcal{G}} \bmeta_i$ and $\max_{\bmeta \in \mathcal{D}} \bmeta_i \leq  \max_{\bmeta \in \mathcal{G}} \bmeta_i$, $\forall i = 1, ..., n$. Then $\forall \bmeta \in \mathcal{D}$, there exists at least a point $\bm{g} \in \mathcal{G}$ such that $|\bmeta_i - {\bm{g}}_i | \leq \frac{r}{2}$ for all $i=1, ..., n$.
Let $\mathcal{G}_D$ denote those grid points that are close to the decision boundary, i.e. $\mathcal{G}_D = \{ \bm{g} \in \mathcal{G} | |\bmeta_i - {\bm{g}}_i | \leq \frac{r}{2}, \text{ for all } i=1, ..., n, \forall \bmeta \in \mathcal{D} \}$. 

Now we show that $\mathcal{G}_D \subseteq \{ \mathcal{B} \cap \mathcal{G} \}$, where $\mathcal{B}$ is defined in \Cref{eq:def_B}. 
Notice that the maximum $\ell_2$ distance between a point $\bmeta \in \mathcal{D}$ and its closest point $\bm{g} \in \mathcal{G}_D$ is $\frac{\sqrt{n}}{2}r$. Then we find the maximum and minimum Lyapunov function value by perturbing $\bmeta \in \mathcal{D}$ within $\frac{\sqrt{n}}{2}r$ distance. 

Consider the following maximization problem:
\begin{subequations}
\begin{align}
    & \max \, (\bmeta + \bm{v})^\top P (\bmeta + \bm{v}) \\
    & \text{s.t.} \quad  \bmeta^\top P \bmeta = c \\
    & \quad \quad \quad \bm{v}^\top \bm{v} = d^2 
\end{align}
\end{subequations}
Let the eigenvalue decomposition of $P$ be $P=U \Lambda U^\top$, where $U$ is an orthonormal matrix, and $\Lambda=\text{diag}\{ \lambda_1, \lambda_2, ..., \lambda_n \}$ with $\lambda_1 \geq \lambda_2 \geq ... \geq \lambda_n$. Then the solution of the above problem equals to the solution of the following problem (rotating the coordinates by $U$):
\begin{subequations}
\begin{align}
    & \max \, (\bmeta + \bm{v})^\top \Lambda (\bmeta + \bm{v}) \\
    & \text{s.t.} \quad  \bmeta^\top \Lambda \bmeta = c \\
    & \quad \quad \quad \bm{v}^\top \bm{v} = d^2 
\end{align}
\end{subequations}
Then we can find the upper bound of the objective by the following:
\begin{align}
    \sum_{i=1}^n (\bmeta_i + \bm{v}_i) \lambda_i &= c + \sum_{i=1}^n \lambda_i \bm{v}_i^2 + 2 \sum_{i=1}^n \lambda_i \bmeta_i \bm{v}_i \\
    & \leq c + \lambda_1 d^2 + 2 \sum_{i=1}^n \lambda_i \bmeta_i \bm{v}_i \\
    & \leq c + \lambda_1 d^2 + 2 \sqrt{\sum_{i=1}^n (\sqrt{\lambda_i} \bmeta_i)^2 \sum_{i=1}^n (\sqrt{\lambda_i} \bm{v}_i)^2} \\
    & = c + \lambda_1 d^2 + 2 \sqrt{c \sum_{i=1}^n \lambda_i \bm{v}_i^2} \\
    & \leq c + \lambda_1 d^2 + 2d \sqrt{c \lambda_1} \\
    & = (\sqrt{c} + d\sqrt{\lambda_1})^2 \coloneqq \overline{c}
\end{align}

Similarly, we can find the lower bound of the objective by:
\begin{align}
    \sum_{i=1}^n (\bmeta_i + \bm{v}_i) \lambda_i &= c + \sum_{i=1}^n \lambda_i \bm{v}_i^2 + 2 \sum_{i=1}^n \lambda_i \bmeta_i \bm{v}_i \\
    & \geq c + \sum_{i=1}^n \lambda_i \bm{v}_i^2 - 2 \sqrt{c \sum_{i=1}^n \lambda_i \bm{v}_i^2} \\
    & = \left(\sqrt{c} - \sqrt{\sum_{i=1}^n \lambda_i \bm{v}_i^2} \right)^2 \\
    & \geq (\sqrt{c} - d\sqrt{\lambda_1})^2 \coloneqq \underline{c}
\end{align}
Therefore, the maximum and minimum Lyapunov function value the points in $\mathcal{G}_D$ can attain are $\overline{c}$ and $\underline{c}$ with $d=\frac{\sqrt{n}}{2}r$, and thus we have $\mathcal{G}_D \subseteq \{ \mathcal{B} \cap \mathcal{G} \}$. 

By definition of $\mathcal{G}_D$, we have that for any $\bmeta \in \mathcal{D}$, there exist an $\bms \in \mathcal{G}_D$ such that $|\bmeta_i - {\bms}_i | \leq \frac{r}{2}$ for all $i=1, ..., n$. Since $\mathcal{G}_D \in \{ \mathcal{B \cap \mathcal{G}} \}$, we have that for any $\bmeta \in \mathcal{D}$, there exist an $\bms \in \{ \mathcal{B} \cap \mathcal{G}\}$ such that $|\bmeta_i - {\bms}_i | \leq \frac{r}{2}$ for all $i=1, ..., n$.

\textbf{(b) Sampling on the decision boundary.}
For any $\bmeta \in \mathscr{D}_y$, let $\bmz=[ \den \bmeta_1, ..., \den\bmeta_n ]$. By definition of $\mathscr{D}_y$, in addition to $\sum_i \bmz_i = \den$, we have 
\begin{align} 
    & \sum_i \bmz_i = \den \label{eq: def_Dy_sum} \\
    & \bmz_y = \max_{j \neq y} \bmz_j  \label{eq: def_Dy_max}
\end{align}
Define $\tilde{\bmz} = [\bmz_1 - \lfloor \bmz_1 \rfloor, ... , \bmz_n - \lfloor \bmz_n \rfloor ]$ to be the vector that contains the fractional part of each element in $\bmz$. 
Then we sort $\tilde{\bmz}$ in a non-decreasing order. For the tied elements that equals to $\tilde{\bmz}_y$, we put $\tilde{\bmz}_y$ as the last. We denote the sorted vector as $\tilde{\bmz}' = [ \tilde{\bmz}_{i_1}, ..., \tilde{\bmz}_{i_n} ]$, where $\tilde{\bmz}_{i_1} \leq ... \leq \tilde{\bmz}_{i_n}$. 
Let $v: \mathbb{Z}^+ \rightarrow \mathbb{Z}^+$ to be a function that maps the indices in $\tilde{\bmz}$ to the indices in $\tilde{\bmz}'$. For instance, if $\tilde{\bmz}_1$ becomes the third element in $\tilde{\bmz}'$, then $v(1) = 3$. If $\tilde{\bmz}_j = \tilde{\bmz}_y$, we have $v(y) > v(j)$. 

Notice that $\sum_i \tilde{\bmz}'_i = \sum_i \tilde{\bmz}_i = \sum_i \bmz_i - \sum_i \lfloor \bmz_i \rfloor = \den - \sum_i \lfloor \bmz_i \rfloor$, and $\sum_i \tilde{\bmz}'_i < n$ since $0 \leq \tilde{\bmz}'_i < 1$ for $i=1,...,n$.
Let $k = n - (\den - \sum_i \lfloor \bmz_i \rfloor)$, we have
\begin{equation}
\label{eq: equal_area}
    \tilde{\bmz}'_1 + ... + \tilde{\bmz}'_k = (1 - \tilde{\bmz}'_{k+1}) + ... + (1 - \tilde{\bmz}'_n)
\end{equation}
Define vector $\bmq$ as follows:
\begin{equation} 
\label{eq: grid_boundary}
\bmq_{i_j} =\left\{
\begin{array}{lll}
    \lfloor \den \bmeta_{i_j} \rfloor , & j = 1, ... , k \\
    \lceil \den \bmeta_{i_j} \rceil , & j = k+1, ..., n
\end{array}
\right.
\end{equation}
Then we have $| \bmq_i - \bmz_i | < 1$ for all $i=1, ..., n$. 
Now we check $\bmq$ satisfies \Cref{eq: def_Dy_sum} and a relaxed version of \Cref{eq: def_Dy_max}.
First, we have $\sum_i \bmq_i = \den$ because of \Cref{eq: equal_area}.
Next, we show $\bmq_y \geq \max_{i \neq y} \bmq_i$ by contradiction. Suppose there exists an index $j$ such that $\bmq_j > \bmq_y$, then it has to be the case where $\lfloor \bmz_j \rfloor = \lfloor \bmz_y \rfloor$ and we take ceiling on $\bmz_j$ and take floor on $\bmz_y$, i.e. $v(j) > k$ and $v(y) \leq k$. This means $\tilde{\bmz}_j > \tilde{\bmz}_y$, because $v$ is the sorted indices of $\tilde{\bmz}$ in a non-decreasing order and this gives $\tilde{\bmz}_j \geq \tilde{\bmz}_y$, and if $\tilde{\bmz}_j = \tilde{\bmz}_y$, we have $v(y) > v(j)$, which is contradictory to $v(j) > k$ and $v(y) \leq k$. Then we have $\bmz_y = \lfloor \bmz_y \rfloor + \tilde{\bmz}_y < \bmz_j = \lfloor \bmz_j \rfloor + \tilde{\bmz}_j$, which is contradictory to \Cref{eq: def_Dy_max}. Therefore, there does not exist a $j$ such that $\bmq_j > \bmq_y$, i.e. $\bmq_y \geq \max_{i \neq y} \bmq_i$. For the cases where $\bmq_y = \max_{i \neq y} \bmq_i$, we have $\bmq \in S_y$, i.e. $\bmq$ is a sampled point.

For the cases where $\bmq_y > \max_{i \neq y} \bmq_i$, we show that we can modify $\bmq$ to $\tilde{\bmq}$ such that $\tilde{\bmq} \in S_y$ and $|\tilde{\bmq}_i - \bmz_i| \leq 1$ for all $i=1, ..., n$. Let $\mathcal{I} = \{ i \in \mathbb{Z}^+ | \bmz \neq y, \bmz_i = \bmz_y \}$ be the set that contains the indices of all runner-up elements in $\bmz$. If $\bmq_y > \max_{i \neq y} \bmq_i$, then we must have $\bmq_y = \lceil \den \bmeta_y \rceil$, and $\bmq_i = \lfloor \den \bmeta_i \rfloor$ for all $i \in \mathcal{I}$. We first let $\tilde{\bmq} = \bmq$, and then pick an $i^*$ from $\mathcal{I}$. Let $\mathcal{J} = \{ j \in \mathbb{Z}^+ | \bmq_j \geq 1, j \neq i^*, j \neq y \}$. Notice that $\bmq_{i^*} + \bmq_y = 2 \lfloor \bmz_y \rfloor + 1$, which is an odd number. Since $\sum_i \bmq_i=\den$ and $\den$ is an even number, $\mathcal{J} \neq \emptyset$. We discuss how to obtain $\tilde{\bmq}$ case by case. 

Case 1: If there exists a $j \in \mathcal{J}$ such that $v(j) > k$, we set $\tilde{\bmq}_j = \lfloor \den \bmeta_j \rfloor$, and set $\tilde{\bmq}_{i^*} = \lceil \den \bmeta_{i^*} \rceil$. 
Then we have $\sum_i \tilde{\bmq} = \sum_{i \neq i^*, i \neq j} \bmq_i + \bmq_{i^*} + 1 + \bmq_j - 1 =\den$ and $|\tilde{\bmq}_i - \bmz_i| \leq 1$ for all $i=1, ..., n$.

Case 2: If $v(j) \leq k$ for all $j \in \mathcal{J}$, there must exist a $j$ such that $\bmq_j < \bmq_{i^*}$. Otherwise, $\bmq_y = \bmq_{i^*} + 1$, and $\bmq_i = \bmq_{i^*}$ for all $i \neq y$. Then $\sum_i \bmq_i = \den = n \bmq_{i^*} + 1$, which is contradictory to the assumption that $\den \not\equiv 1 (\text{mod } n)$. Then set $\tilde{\bmq}_j = \lceil \den \bmeta_j \rceil$ and $\tilde{\bmq}_y = \lfloor \den \bmeta_y \rfloor$. Since $\bmq_j < \bmq_{i^*}$, we have $\tilde{\bmq}_j = \bmq_j + 1 \leq \tilde{\bmq}_y = \bmq_{i^*}$.
\end{proof}

\begin{remark}
The assumption that $\den \not\equiv 1 (\text{mod } n)$ is easy to satisfy. Since we also require $\den$ is an even number, as long as $n$ is also an even number, we have $\den \not\equiv 1 (\text{mod } n)$. We can also relax this assumption by adding $[\frac{\den}{n}, ..., \frac{\den}{n}]$ to $S_y$.
\end{remark}

\subsection{Custom solver for the CBF-QP}
Consider a CBF-QP in the following form:
\begin{align}
\label{eq:supp_qp}
    f(\hat{\bmf}) &= \argmin_{\mathtt{f} \in \BR^n} \frac{1}{2}\lVert \mathtt{f} - \hat{\bmf} \rVert_2^2 \\
    & \text{s.t} \quad \quad \mathbbm{1}^\top \mathtt{f} = b \nonumber \\
    & \quad \quad \underline{\mathtt{f}}(\bmeta) \leq \mathtt{f} \leq \overline{\mathtt{f}}(\bmeta) \nonumber
\end{align}
where $\underline{\mathtt{f}}$ and $\overline{\mathtt{f}}$ are non-increasing function of $\bmeta$.
By the Karush–Kuhn–Tucker (KKT) conditions, the solution of \eqref{eq:supp_qp} is as follows:
\begin{align}
\label{eq:supp_soln}
    f(\hat{\bmf}) = \left[\hat{\bmf} + \lambda^* \mathbbm{1} \right]_{\underline{\mathtt{f}}}^{\overline{\mathtt{f}}}
\end{align}
where $\left[ \cdot \right]_{\underline{\mathtt{f}}}^{\overline{\mathtt{f}}}$ stands for lower and upper clipping by $\underline{\mathtt{f}}$ and $\overline{\mathtt{f}}$, and $\lambda^*$ is the Lagrangian multiplier. We find $\lambda^*$ such that $\mathbbm{1}^\top f(\hat{\bmf}) = b$ using binary search. Since $f(\hat{\bmf})$ is clipped by $\underline{\mathtt{f}}$ and $\overline{\mathtt{f}}$, the search range of $\lambda^*$ is $[\min_i (\hat{\bmf}_i - \underline{\mathtt{f}}_i), \max_i (\overline{\mathtt{f}}_i - \hat{\bmf}_i) ]$, where $\hat{\bmf}_i$ stands for the $i$th element in $\hat{\bmf}$, and $\underline{\mathtt{f}}_i$, $\overline{\mathtt{f}}_i$ stand for the $i$th element in $\underline{\mathtt{f}}(\bmeta)$ and $\overline{\mathtt{f}}(\bmeta)$ respectively.
Here we consider a general constraint where there are both lower and upper bounds on $\mathtt{f}$. If there is only a lower bound constraint on $\mathtt{f}$ as in \cref{eq:cbfqp_c}, we search $\lambda^*$ in $[\min_i (\hat{\bmf}_i - \underline{\mathtt{f}}_i), -\min_i \hat{\bmf}_i]$, because if $\lambda^* > -\min_i \hat{\bmf}_i$, then $\mathbbm{1}^\top \mathtt{f} > 0$, violating \cref{eq:cbfqp_b}.

To differentiate through the solver in training, we derive the derivatives based on the binding conditions of the inequality constraints. First, we define the binding and not binding sets as follows:
\begin{align}
    & \mathcal{S} = \{ i | f_i = \underline{\mathtt{f}}_i \text{ or } f_i = \overline{\mathtt{f}}_i \}, \quad \mathcal{S}^c = \Omega \backslash S \\
    & \mathcal{S}_{l} = \{ i | f_i = \underline{\mathtt{f}}_i \}, \quad \mathcal{S}_{l}^c = \Omega \backslash \mathcal{S}_{l} \\
    & \mathcal{S}_{u} = \{ i | f_i = \overline{\mathtt{f}}_i \}, \quad \mathcal{S}_{u}^c = \Omega \backslash \mathcal{S}_{u} 
\end{align}
where $\Omega = \{ i \in \mathbb{Z}^+ | i \leq n \}$. Then the derivatives of $f$ with respect to the inputs $\hat{\bmf}$, $\underline{\mathtt{f}}$ and $\overline{\mathtt{f}}$ are as follows:
\begin{equation} 
\frac{d f_i}{d \hat{f}_j}=\left\{
\begin{array}{lll}
     0, & i \in \mathcal{S}^c \text{ or } j \in \mathcal{S}^c \\
     1 - \frac{1}{n(\mathcal{S})}, & i=j\in \mathcal{S} \\
     - \frac{1}{n(\mathcal{S})} & i \neq j, i \in \mathcal{S}, j \in \mathcal{S}
\end{array}
\right.
\end{equation}
\begin{equation}
    \frac{d f_i}{d \underline{\mathtt{f}}_j}=\left\{
\begin{array}{lll}
     0, & j \in \mathcal{S}_{l}, \forall i \in \Omega \\
     0, & j \in \mathcal{S}_{l}^c, i \in \mathcal{S}_{l}^c \backslash \{ j \} \\
     1, & j \in \mathcal{S}_{l}^c, i=j \\
     - \frac{1}{n(\mathcal{S}_{l})} & j \in \mathcal{S}_{l}^c, i \in \mathcal{S}_{l}
\end{array}
\right. \quad
    \frac{d f_i}{d \overline{\mathtt{f}}_j}=\left\{
\begin{array}{lll}
     0, & j \in \mathcal{S}_{u}, \forall i \in \Omega \\
     0, & j \in \mathcal{S}_{u}^c, i \in \mathcal{S}_{u}^c \backslash \{ j \} \\
     1, & j \in \mathcal{S}_{u}^c, i=j \\
     - \frac{1}{n(\mathcal{S}_{u})} & j \in \mathcal{S}_{u}^c, i \in \mathcal{S}_{u}
\end{array}
\right.
\end{equation}

\subsection{Interval Bound Propagation through CBF-QP}
\label{sec: appendix_ibp_qp}
The dynamics of our NODE is parameterized by a neural network followed by a CBF-QP layer. 
Let $\hat{\bmf}(\bmeta)$ be the dynamics output by the neural network, and let $f(\hat{\bmf})$ be the dynamics after CBF-QP layer. 
Given perturbed input in an interval bound $\underline{\bmeta_i} \leq \bmeta_i \leq \overline{\bmeta_i}$, we first use a a popular linear relaxation based verifier named CROWN \cite{zhang2018efficient} to get an interval bound for $\hat{\bmf}$: $\underline{\hat{\bmf}_i} \leq \hat{\bmf}_i \leq \overline{\hat{\bmf}_i}$.
However, CROWN does not support perturbation analysis on differentiable optimization layers such as our CBF-QP layer and deriving linear relaxation for CBF-QP can be hard.
However, it is possible to derive interval bounds (a special case of linear bounds in CROWN) through CBF-QP.
Consider a QP in the form of \Cref{eq:cbfqp_a,eq:cbfqp_b,eq:cbfqp_c},
we bound each dimension of $f(\hat{\dyn})$ in $\mathcal{O}(n)$ by solving the QP with the corresponding element of the input set to the lower or upper bound (\Cref{prop: ibp-cbf-qp}).

\begin{proposition}
\label{prop: ibp-cbf-qp}
Consider a CBF-QP in the form of \ref{eq:supp_qp}. Define function $h_i$ to be $h_i: \bmeta, \hat{\bmf} \mapsto f(\hat{\bmf})_i $. Given perturbed input in an interval bound $\underline{\bmeta_i} \leq \bmeta_i \leq \overline{\bmeta_i}$, and $\underline{\hat{\bmf}_i} \leq \hat{\bmf}_i \leq \overline{\hat{\bmf}_i}$, we have
\begin{align}
\label{eq:supp_ibp_prop}
    h_i(\bmeta_{ub}^i, \hat{\bmf}_{lb}^i) \leq f(\hat{\bmf})_i \leq h_i(\bmeta_{lb}^i, \hat{\bmf}_{ub}^i)
\end{align}
where $\bmeta_{ub}^i, \bmeta_{lb}^i$ and $\hat{\bmf}_{ub}^i, \hat{\bmf}_{lb}^i$ are defined as follows:
\begin{equation} 
\bmeta_{ub}^i=\left\{
\begin{array}{lll}
     \overline{\bmeta_j}, & j=i \\
     \underline{\bmeta_j}, & j \neq i
\end{array}
\right. \quad
\bmeta_{lb}^i=\left\{
\begin{array}{lll}
     \underline{\bmeta_j}, & j=i \\
     \overline{\bmeta_j}, & j \neq i
\end{array}
\right.
\end{equation}

\begin{equation} 
\hat{\bmf}_{ub}^i=\left\{
\begin{array}{lll}
     \overline{\hat{\bmf}_j}, & j=i \\
     \underline{\hat{\bmf}_j}, & j \neq i
\end{array}
\right. \quad
\hat{\bmf}_{lb}^i=\left\{
\begin{array}{lll}
     \underline{\hat{\bmf}_j}, & j=i \\
     \overline{\hat{\bmf}_j}, & j \neq i
\end{array}
\right.
\end{equation}

\end{proposition}

\begin{proof}
We prove by contradiction. For the upper bound $f(\hat{\bmf})_i \leq h_i(\bmeta_{lb}^i, \hat{\bmf}_{ub}^i)$, suppose $f(\hat{\bmf})_i > h_i(\bmeta_{lb}^i, \hat{\bmf}_{ub}^i)$. Plug in \Cref{eq:supp_soln}, we have
\begin{equation}
\label{eq:supp_ibp_i}
    \left[ \hat{\bmf}_i + \lambda \right]_{\underline{\mathtt{f}}(\bmeta_i)}^{\overline{\mathtt{f}}(\bmeta_i)} > \left[ \overline{\hat{\bmf}_i} + \lambda' \right]_{\underline{\mathtt{f}}(\overline{\bmeta_i})}^{\overline{\mathtt{f}}(\underline{\bmeta_i})}
\end{equation}
Since $\underline{\mathtt{f}}$ and $\overline{\mathtt{f}}$ are non-increasing function of $\bmeta$, we have $\underline{\mathtt{f}}(\overline{\bmeta_i}) \geq \underline{\mathtt{f}}(\bmeta_i)$ and $\overline{\mathtt{f}}(\underline{\bmeta_i}) \geq \overline{\mathtt{f}}(\bmeta_i)$. Then $\hat{\bmf}_i + \lambda > \overline{\hat{\bmf}_i} + \lambda'$. Since $\hat{\bmf}_i \leq \overline{\hat{\bmf}_i}$, we have $\lambda > \lambda'$.
Then for all $j \neq i$, we have 
\begin{equation}
\label{eq:supp_ibp_j}
    \left[ \hat{\bmf}_j + \lambda \right]_{\underline{\mathtt{f}}(\bmeta_j)}^{\overline{\mathtt{f}}(\bmeta_j)} \geq \left[ \underline{\hat{\bmf}_j} + \lambda' \right]_{\underline{\mathtt{f}}(\overline{\bmeta_i})}^{\overline{\mathtt{f}}(\underline{\bmeta_i})}    
\end{equation}
Sum on both sides of \ref{eq:supp_ibp_i} and \ref{eq:supp_ibp_j}, we have 
\begin{equation}
    \mathbbm{1}^\top f(\bmeta, \hat{\bmf}) > \mathbbm{1}^\top f(\bmeta_{lb}, \hat{\bmf}_{ub})
\end{equation}
which is contradictory to the equality constraint in \ref{eq:supp_qp}. Therefore, we have $f(\hat{\bmf})_i \leq h_i(\bmeta_{lb}^i, \hat{\bmf}_{ub}^i)$. The lower bound in \ref{eq:supp_ibp_prop} can be proved in the same way.
\end{proof}

\section{Sampling Algorithms for Certification}

\begin{algorithm2e}
\SetAlgoLined
\SetAlCapHSkip{0.0em}
\SetKwInOut{Input}{Input}
\Input{Number of classes $K$, sample density $T$, solution set $\operatorname{sol}$ with dimension $T \times K$.}
      // \emph{Initialize $\operatorname{sol}$.} \\
      Initialize each element of $\operatorname{sol}$ to be $\emptyset$. \\
      $\operatorname{sol}[0][k]=\{\mathbf{0}_k\}$, where $\mathbf{0}_k=[0,..,0] \in \mathbbm{R}^k$. \\
      $\operatorname{sol}[j][2]=\{[j/2, j/2]\}$.  \\
      // \emph{Append elements to $\operatorname{sol}[j][k]$.} \\
      \For {$j$ from 2 to $T$}{
        \For {$k$ from 3 to $K$}{
            \For {$l$ from 0 to $k-2$} { 
            \If{$j-k+l \geq 0$ and $k-l \geq 0$}{
                Let $\mathcal{C}$ be the set that contains $k-l-1$ combinations of $\{1,2,..., k-1 \}$.\\
                $\operatorname{sol}[j][k] = \operatorname{sol}[j][k] \cup \{ G(y,a,c,k) | a \in \operatorname{sol}[j-k+l][k-l], c \in \mathcal{C} \}$.
            }
            }
        }
      }
 \Return{$\tilde{S}_y = \operatorname{sol}[T][K] / T$}
 \caption{Sample the points on the decision boundary by dynamic programming.}
 \label{alg:dp_decisionboundary}
\end{algorithm2e}

We describe the process of generating samples on the decision boundary in Algorithm \ref{alg:dp_decisionboundary}. The trick is to break down the $n$-class decision boundary sampling problem to 2 to $(n-1)$-class sampling problems. For instance, to generate samples with $k$ non-zero elements on an $n$-class decision boundary with density $T$, one can sample points on an $k$-class decision boundary with density $T-k$ first, adding 1 to each dimension to make each element non-zero, and assign each element to an $n$ dimensional vector.   
This operation is denoted by function $G$ in Algorithm \ref{alg:dp_decisionboundary}. $G$ takes two list inputs $a$ and $c$, increases each element in $a$ by 1, rearranges the elements in $a$ according to the indices given by $c$, and output a new list $w$ of shape $k$. Equation \ref{eqn:sample_boundary_rearrange} gives the form of the output of $G$. If the inputs are $y=0, a=(3,2,3,0), c=\{2,3,7\}$ and $k=8$ ($y$ is the label, $a$ corresponds to a point on 4-class decision boundary, and $c$ specifies the non-zero dimension except for the label dimension in an 8 dimensional vector), then the output is $w = (4,0,3,4,0,0,0,1)$.
\begin{equation} \label{eqn:sample_boundary_rearrange}
w_i=\left\{
\begin{array}{lll}
     a_0+1, & i=y \\
     a_{|\{1,2,\cdots,i\}\cap c|}+1, & i\in c  \\
     0. & \text{o/w} 
\end{array}
\right.
\end{equation}

\FloatBarrier

\section{Experiment Details}
\subsection{Nonlinear control}
\label{sec:appen_segway}
\paragraph{Baselines.}
Although there are many baselines in robust control/RL, the settings and goals are not the same as our work (certified robust forward invariance) and thus not directly comparable (Table \ref{tab:baseline-setting}). We found the setting and goal in [3] is the most similar to us, and thus we compare with their method in Table \ref{tab:robust_control_results}.

\begin{table}[h!]
\centering
\resizebox{\columnwidth}{!}{
\begin{tabular}{llll}
\toprule
Paper & Setting & Goal & Certified or not \\ 
\midrule
Robust MPO \cite{mankowitz2019robust} & \begin{tabular}[c]{@{}l@{}}Continuous MDP. \\ model mis-specification\end{tabular} & \begin{tabular}[c]{@{}l@{}}Maximize worst case \\ RL performance.\end{tabular} & No \\
\midrule
CROP \cite{wu2021crop} & \begin{tabular}[c]{@{}l@{}}Discrete MDP. \\ Input state perturbations\end{tabular} & \begin{tabular}[c]{@{}l@{}}Certification of per-state \\ actions and lower bound of \\ cumulative rewards\end{tabular} & \begin{tabular}[c]{@{}l@{}}Yes for action and \\ cumulative reward\end{tabular} \\
\midrule
Robust MBP \cite{donti2020enforcing} & \begin{tabular}[c]{@{}l@{}}Norm-bounded linear \\ differential inclusions. \\ Disturbance is norm-bounded \\ and added to the dynamics\end{tabular} & \begin{tabular}[c]{@{}l@{}}Train a nonlinear controller \\ that satisfies the exponential \\ stability condition under \\ perturbed dynamics\end{tabular} & Yes for stability \\
\midrule
Robust FI-ODE (ours) & \begin{tabular}[c]{@{}l@{}}Continuous nonlinear \\ dynamics. Norm-bounded \\ system parameter \\ perturbations.\end{tabular} & \begin{tabular}[c]{@{}l@{}}Train a nonlinear controller \\ that satisfies the forward \\ invariance (safety) condition \\ under perturbed dynamics\end{tabular} & \begin{tabular}[c]{@{}l@{}}Yes for forward invariance \\ (safety)\end{tabular} \\
\bottomrule
\end{tabular}
}
\caption{Settings of baseline methods.}
\label{tab:baseline-setting}
\end{table}

\FloatBarrier

\paragraph{Experiment details.}
The dynamics for the segway system is \cite{gurriet2018towards}:
\begin{equation}
   \frac{d}{d t}\left[\begin{array}{c}
    \phi \\
    v \\
    \dot{\phi}
    \end{array}\right] = 
    \left[ \begin{array}{c}
    \dot{\phi} \\
    \frac{\cos{\phi}(-1.8u+11.5v+9.8 \sin{\phi}) - 10.9u + 68.4v - 1.2 \dot{\phi}^2 \sin{\phi}}{\cos{\phi} - 24.7} \\
    \frac{(9.3u-58.8v)\cos{\phi}+38.6u-234.5v-\sin{\phi}(208.3+\dot{\phi}^2\cos{\phi})}{\cos^2{\phi}-24.7}
    \end{array} \right]
\end{equation}
All the constants except the acceleration of gravity $g=9.8$ are system parameters. We enforce robust forward invariance under $\pm 2\%$ perturbation on each of the parameters.
We use a 3 layer MLP as the controller and use the Adam optimizer \cite{kingma2014adam}. First, we train the controller to imitate a Linear Quadratic Regulator (LQR) controller. 
Then we jointly learn the Lyapunov function and the controller. 
We use adversarial training on $\ins$ and $\state$ to encourage the smoothness of $\dyn_\params(\state, \ins)$ with respect to $\ins$. Specifically, we find $\adv_x$ and $\adv_{\eta}$ that maximizes $\frac{\partial V}{\partial \state }^\top \dyn_\params(\state + \adv_{\eta}, \ins + \adv_x)$ and train on $\state + \adv_{\eta}$ and $\ins + \adv_x$.
We set $\kappa(V(\bmeta))$ in \Cref{eq:mc_lya_loss} to be a constant:  $\kappa(V(\bmeta))= \kappa' \leq  \eps L_{V} L_{f}^x$ ($\kappa'$ is smaller than the requisite lower bound since we train with adversarial inputs and the requisite lower bound is for nominal inputs).
We use learning rate of 0.02 for the Lyapunov function and 0.01 for the controller.
Next, we jointly learn the controller and finetune the lyapunov function from the previous stage via adversarial training on both the system states and system parameters. We use learning rate of 0.01 for the controller and 0.002 for the lyapunov function.

To certify forward invariance, we use rejection sampling on the state space to cover the boundary of the forward invariant set. The spacing of the ambient grid is set to 0.01 for all 3 dimensions, and the rejection criteria is \Cref{eq:def_B}. 
For \emph{robust} forward invariance, we certify in 2 phases. In phase 1, we set the spacing of the ambient grid to be 0.005, and we also sample a grid on the system parameter space to cover the $\pm 2\%$ perturbation range on the parameters with the spacing in \Cref{tab:spacing}. In phase 2, we sample denser states and system parameters around the states that cannot be certified in phase 1. We set the spacing along each state dimension to be 0.0025, and the spacing of the parameters to be those in brackets in \Cref{tab:spacing}.

\begin{table}[h]
\caption{Spacing of the sampled grid on the system parameter space in terms of percentage of each parameter value. Spacing for phase 2 is in the bracket.}
\begin{center}
\resizebox{\linewidth}{!}{
\begin{tabular}{lccccccccccc}
\toprule
Parameter & 1.8 & 11.5 & 10.9 & 68.4 & 1.2 & 9.3 & 58.8 & 38.6 & 234.5 & 208.3 & 24.7 \\ 
\hline
Spacing ($\%$) & 4 (4) & 4 (4) & 4 (4) & 1 (1) & 4 (4) & 4 (4) & 2 (1) & 1 (1) & 1 (0.25) & 4 (4) & 4 (4) \\
\bottomrule
\end{tabular}
\label{tab:spacing}
}
\end{center}
\end{table}

We run all the control experiments on Intel Core i9 CPU. The certification time for forward invariance is 3.1 seconds, while for robust forward invariance, it is 3285.3 seconds. We also report the training time and the standard deviation of forward invariant rate of each method in \Cref{tab:appen_robust_control_results}.

\begin{table}[h]
\vspace{-10pt}
  \centering
\caption{Robustness of controllers trained with different training methods. The numbers are the percentage of trajectories that stay within the forward invariant set under the nominal and adversarial system parameters on 1000 adversarially selected initial states. We report the mean and standard deviation over 3 runs. The certificate column indicates whether or not we can certify the (robust) FI property.}  
  \resizebox{0.85\linewidth}{!}{  
  \renewcommand{\arraystretch}{1.2}
  \begin{tabular}{lccccc}
    \hline
    \multirow{2}{*}{\textbf{Training Method}} & \multirow{2}{*}{\textbf{Training Time (s)}} & \multicolumn{2}{c}{\textbf{Empirical FI rate (\%)}} & \multicolumn{2}{c}{\textbf{Certificate}}  \\
    \cline{3-6}
    &  & \textbf{Nominal Params} & \textbf{Adv Params} & \textbf{FI} & \textbf{Robust FI} \\
    \hline
    Standard Backprop Training & 191.3 & 58.0 $\pm$ 1.9 & 50.4 $\pm$ 1.1 & \xmark & \xmark \\
     Basic Lyapunov Training \cite{jimenez2022lyanet} & 1.5 & 90.2 $\pm$ 0.3 & 52.6 $\pm$ 0.6 & \xmark & \xmark \\     
    \  + Adaptive Sampling & 3.3 & 100 $\pm$ 0.0 &  68.9 $\pm$ 0.3 & \cmark & \xmark \\
      \ + Adversarial Training & 67.7 & 100 $\pm$ 0.0 & 97.8 $\pm$ 0.3 & \cmark & \xmark \\       
    \textbf{\ + Both (Ours)} & 96.7 & 100 $\pm$ 0.0  & 100 $\pm$ 0.0 & \cmark & \cmark \\ 
    \hline
  \end{tabular}
  }
  \label{tab:appen_robust_control_results}
\end{table}

\subsection{Image classification}

\paragraph{Useful techniques for classification problems.} 
Typically the decision boundary $\{ \state \in \BR^n | \bmeta_y = \max_{i \neq y} \bmeta_i \}$ is not compact on the logit space ($\bmeta \in \BR^n$). However, we need the Lyapunov level set to be compact for certification because we can only sample finite points and verify the conditions hold in their neighborhoods. Therefore, we restrict the states to evolve on a probability simplex for classification problems. To do so, we use a Control-Barrier Function based Quadratic Program (CBF-QP) \cite{ames2016control}, implemented as a differentiable optimization layer \cite{cvxpylayers2019} in the dynamics (Appendix \ref{sec: simplex-arch}). However, the linear relaxation based verifier that we use (CROWN) \cite{zhang2018efficient} does not support perturbation analysis on differentiable optimization layers such as our CBF-QP layer. Since deriving linear relaxation for CBF-QP is hard, we derive an interval bound (a special case of linear bounds in CROWN) through CBF-QP (\Cref{sec: appendix_ibp_qp}). 

\paragraph{Experiment settings.}
For image classification tasks, we use orthogonal layers \cite{trockman2021orthogonalizing} in the neural network so that the dynamics has 1 Lipschitz constant with respect to both the state and the input. Specifically, we have $\hat{\bmf}(\bmeta, x) = W_3 \sigma(W_2 \sigma(W_1 \bmeta + g(x)) + b_2) + b_3$, where $g$ is a neural network with 4 orthogonal convolution layers and 3 orthogonal linear layers, and $W_1, W_2, W_3$ are orthogonal matrices, $\sigma$ is the ReLU activation function. 
We set $\kappa(V(\bmeta)) = \eps L_{V} L_{f}^x V(\bmeta)$ in the training loss (\Cref{eq:mc_lya_loss}).  Note that on the decision boundary, $V(\bmeta)=1$.

In the CBF-QP, we need to pick a class $\mathcal{K}^e_{\infty}$ function $\alpha$ for the inequality constraint $\mathtt{f} \geq -\alpha(\bmeta)$. Here we use $\alpha(\bmeta) = c_1 (e^{c_2 \bmeta} - 1)$, where $c_1=100$, and $c_2=0.02$. Comparing with a linear function, this $\alpha(\bmeta)$ leads to a higher margin over Lipschitz ratio, resulting in better certified accuracy.

During training, we train with batch size of 64. For each image, we sample 512 states. From epoch 1 to 10, all the states are uniformly sampled in the simplex. From epoch 11 to epoch 60, we linearly decay the proportion of uniform sampling in the simplex and increase the portion of uniform sampling within the correct classification set for each class. To sample uniformly in the simplex, we first sample $n$ points from exponential distribution $\text{Exp}(1)$ independently, then we normalize the $n$ dimensional vector to have sum 1. To sample uniformly in the correct classification region for each class, we first uniformly sample from the simplex, then we swap the maximum element with the element corresponding to the correct label. We choose $\kappa$ to be 2.0 in the loss function (\cref{eq:mc_lya_loss}). We use Adam optimizer with learning rate 0.01, and train for 300 epochs in total.

For certification, we choose $\den=40$ when sampling on the decision boundary. A larger $\den$ will lead to better certified accuracy but increases the computational cost dramatically.
We ran the experiments on an NVIDIA RTX A6000 GPU.

For baseline methods \cite{pabbaraju2020estimating, chen2021semialgebraic}, the adversarial accuracy is evaluated with PGD attack. For our methods, we use AutoAttack \cite{croce2020reliable} to evaluate the empirical adversarial robustness.

\begin{table}[h]
\caption{Computational costs for certification on CIFAR-10.}
\begin{center}
\resizebox{0.6\linewidth}{!}{
\begin{tabular}{lcccc}
\toprule
\thead{\textbf{Certification} \\ \textbf{Method}} & \thead{\textbf{Sampling} \\ \textbf{density ($N$)}} & \textbf{\# samples} & \textbf{Time (s)} & \textbf{Certified} \\
\hline
Lipschitz & 20 & $3.67 \times 10^5$ & 1.03 & 0 \\ 
Lipschitz & 30 & $5.50 \times 10^6$ & 1.37 & 27.40 \\
Lipschitz & 40 & $4.13 \times 10^7$  & 2.8 & 33.46 \\ 
CROWN & 40 & $4.13 \times 10^7$ & 240 & 42.27 \\
\bottomrule
\end{tabular}
}
\label{tab:abla-costs}
\end{center}
\end{table}

\paragraph{Computational cost.}
The main computational costs of our method comes from the number of samples that are needed to cover the boundary of the forward invariant set.
\Cref{tab:abla-costs} compares the computational costs and performance for different certification methods on CIFAR-10. 
We first compare the results of certifying with Lipschitz bounds and CROWN \cite{zhang2018efficient}. Certifying with Lipschitz bounds is faster. Since we can pre-compute the Lipschitz bound of the dynamics, the certification time equals to the inference time on all the states. 
Certifying with CROWN provides a tighter bound and thus higher certified accuracy but is more computationally expensive than using the Lipschitz bound.
We also compare the performance of different sampling density by choosing different $N$ in \Cref{eq:sample_simplex}. With larger $N$, we can cover the region of interest with smaller neighborhood around each sampled point. We vary $N$ using the Lipschitz certification method because it is faster to evaluate, but the pattern should remain the same for CROWN. As expected, we get better accuracy with larger sampling density, but the computational time is longer since we have more samples.

\end{document}